\crefname{appendix}{App.}{Apps.}
\Crefname{appendix}{App.}{Apps.}
\crefname{corollary}{Cor.}{Cors.}
\Crefname{corollary}{Cor.}{Cors.}
\crefname{section}{Sec.}{Secs.}
\Crefname{section}{Sec.}{Secs.}
\crefname{proposition}{Prop.}{Props.}
\Crefname{proposition}{Prop.}{Props.}
\theoremstyle{plain}
\newtheorem{theorem}{Theorem}[section]
\newtheorem{proposition}[theorem]{Proposition}
\newtheorem{lemma}[theorem]{Lemma}
\theoremstyle{definition}
\theoremstyle{remark}
\newcommand{\cmark}{\ding{51}} 
\newcommand{\xmark}{\ding{55}} 
\newmdenv[
  backgroundcolor=blue!5,
  linecolor=blue,
  linewidth=1pt,
  roundcorner=5pt,
  nobreak=true
]{summarybox}
\title{Exploring the Design Space of Diffusion Bridge Models}
\author{Shaorong Zhang, Yuanbin Cheng, Greg Ver Steeg \\ Unversity of California Riverside \\
\{szhan311, ychen871, gregoryv\}@ucr.edu
}
\begin{document}

\maketitle

\begin{abstract}
Diffusion bridge models and stochastic interpolants enable high-quality image-to-image (I2I) translation by creating paths between distributions in pixel space. However, the proliferation of techniques based on incompatible mathematical assumptions have impeded progress. In this work, we unify and expand the space of bridge models by extending Stochastic Interpolants (SIs) with preconditioning, endpoint conditioning, and an optimized sampling algorithm. These enhancements expand the design space of diffusion bridge models, leading to state-of-the-art performance in both image quality and sampling efficiency across diverse I2I tasks. Furthermore, we identify and address a previously overlooked issue of low sample diversity under fixed conditions. We introduce a quantitative analysis for output diversity and demonstrate how we can modify the base distribution for further improvements.

\end{abstract}


\section{Introduction}

Denoising Diffusion Models (DDMs) and flow matching create a stochastic process to transition Gaussian noise into a target distribution~\citep{song2019generative, ho2020denoising, song2020score, lipman2022flow}. Building upon this, diffusion bridge-based models (DBMs) have been developed to transport between two arbitrary distributions, $\pi_T$ and $\pi_0$, including I2SB \citep{liu2023i2sb}, DSBM \citep{yifengshiDiffusionSchrOdinger2023}, DDBM \citep{linqizhouDenoisingDiffusionBridge2023}, DBIM \cite{zheng2024diffusion}, Bridge Matching~\citep{peluchetti2023non}. DBMs achieve superior image quality in I2I translation compared to DDMs~\citep{linqizhouDenoisingDiffusionBridge2023, liu2023i2sb, albergo2023stochasticdatacoupling}, primarily because the distance between source and target image distributions is typically smaller than that between Gaussian and target distributions.

While DBMs like DDBM \citep{yifengshiDiffusionSchrOdinger2023}, DBIM \citep{zheng2024diffusion}, and I2SB \citep{liu2023i2sb} achieve state-of-the-art FID scores in image-to-image translation, they suffer from limited design flexibility, constrained bridge path formulations, and complex parameter tuning. In contrast, Stochastic Interpolants (SIs) \citep{albergo2023stochastic, albergo2023stochasticdatacoupling} offer a simpler and more flexible framework, but they have yet to integrate practical advances from recent diffusion bridge models, such as preconditioning. Besides, SIs require training two separate models, unlike the more efficient single-model setup in DDBM. \Cref{tab:charact} summarizes the key characteristics of these methods, highlighting that their complementary strengths had not yet been unified.

Another overlooked issue stemming from restrictive design choices in previous bridge models is the lack of diversity in outputs. While some image translation tasks are one-to-one, we find that in one-to-many translation tasks, like black and white edges to color images, previous methods produce limited variation in colors and textures. We refer to this as the \emph{conditional diversity} problem and show that our approach leads to significant improvements.


\begin{figure}[t]
    \centering

    \begin{minipage}{0.35\textwidth}
        \centering
        \includegraphics[width=\linewidth]{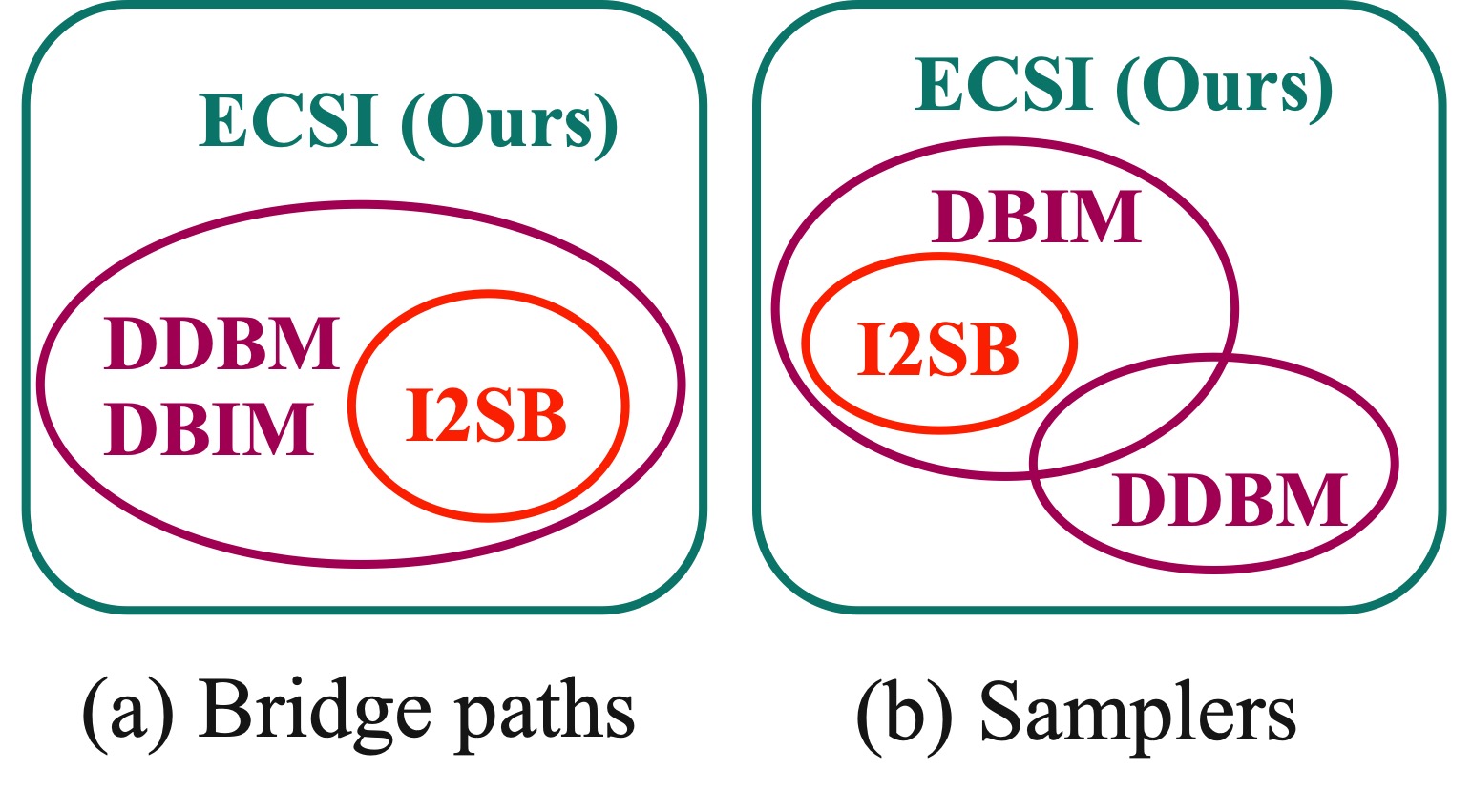} 
        \caption{The design space of bridge paths and samplers.}
        \label{fig:example}
    \end{minipage}
    \hfill
    \begin{minipage}{0.62\textwidth}
        \centering
        \resizebox{\textwidth}{!}{
        \begin{tabular}{cccccc}
            \toprule
             & DDBM & DBIM & DSBM & SI & ECSI (ours) \\
            \midrule
            Endpoint conditioning & \cmark & \cmark & \xmark & \xmark & \cmark \\
            Uncoupled parameters & \xmark & \xmark & \xmark & \cmark & \cmark \\
            Extensive bridge paths & \xmark & \xmark & \cmark & \cmark & \cmark \\
            Extensive samplers & \xmark & \xmark & \xmark & \cmark & \cmark \\
            Preconditioning & \cmark & \cmark & \xmark & \xmark & \cmark \\
             Modified base density & \xmark & \xmark&\xmark &\xmark & \cmark \\
            \bottomrule
            
        \end{tabular}
        }
        \captionof{table}{Characteristics of different bridge models.}
        \label{tab:charact}
    \end{minipage}

\end{figure}

\begin{figure}
    \centering
\includegraphics[width=0.9\linewidth]{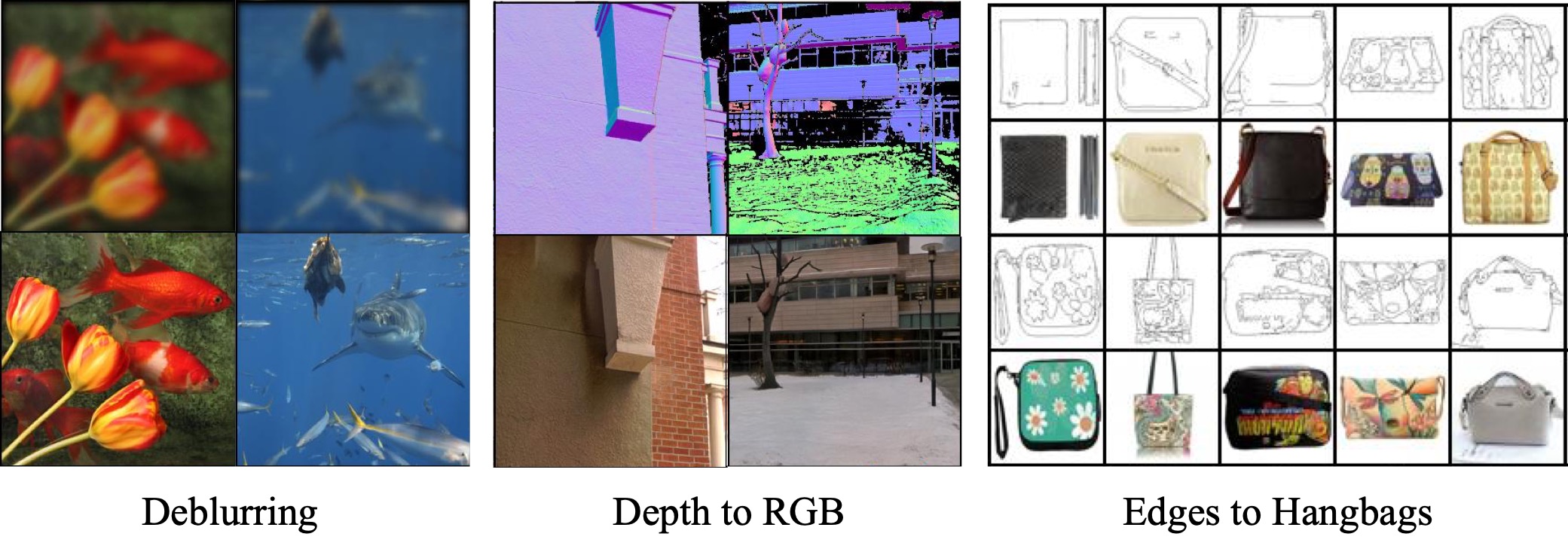}
    \caption{Samples for I2I translation with our ECSI models: Deblurring, Depth-RGB, and Edges to Handbags. For each pair of images, we show the input image (upper) and the output image (bottom).}
    \label{fig:sample_show}
\end{figure}

Our main contributions are as follows: 

\begin{itemize}
    \item We propose \textbf{Endpoint-Conditioned Stochastic Interpolants} (ECSI), which extend stochastic interpolants by incorporating endpoint conditioning and preconditioning. Previous bridge methods artificially coupled unrelated aspects of the transition kernel. ECSI introduces a decoupled parametrization that expands and simplifies the design space for bridge paths and samplers. To further improve sampling quality and efficiency, we develop a novel noise control scheme and an efficient sampling algorithm.
    
    \item We identify a previously overlooked issue: the low diversity of outputs conditioned on fixed source images. To address this, we propose modifying the base distribution. Furthermore, to quantitatively evaluate conditional output diversity, we introduce a new metric—Average Feature Diversity (AFD).

    \item Experimental results demonstrate our model's state-of-the-art performance in both image quality and sampling speed across various I2I tasks, including deblurring, edges-to-handbags translation, and depth-to-RGB conversion. Notably, for handbag generation, our approach yields significantly more diverse outputs with varied colors and textures.
    
\end{itemize}

\section{Background}
\textbf{Notations} Let $\pi_T$, $\pi_0$, and $\pi_{0T}$ represent the base distribution, the target distribution, and the joint distribution of them respectively. $\pi_{\mathrm{cond}}$ and $\pi_{\mathrm{data}}$ represent the distributions of the input and output data. Let $p$ be the distribution of a diffusion process; we denote its marginal distribution at time $t$ by $p_t$, the conditional distribution at time $t$ given the state at time $s$ by $p_{t \vert s}$, and the distribution at time $t$ given the states at times $0$ and $T$ by $p_{t \vert 0, T}$, i.e., the transition kernel of a bridge.

\subsection{Denoising Diffusion Bridge Models}

DDBMs \citep{linqizhouDenoisingDiffusionBridge2023} extend diffusion models to translate between two arbitrary distributions $\pi_0$ and $\pi_T$ given samples from them. Consider a reference process given by:

\begin{equation}
\label{eq:forward_equation}
    d{X}_t = \bar{f}_t{X}_t dt + \bar{g}_t d {W}_t, 
\end{equation}

whose transition kernel is given by $q_{t \vert 0}({x}_t \vert {x}_0) = \mathcal{N} ({x}_t; a_t {x}_0, \sigma_t^2 \mathbb{I})$. This process can be conditioned (or "pinned") at both an initial point ${x}_0$ and a terminal point ${x}_T$ to construct a diffusion bridge. Under mild assumptions, the pinned process is given by Doob's $h$-transform \citep{rogers2000diffusions}:

\begin{equation}
\label{eq:h_transform}
    \mathrm{d}{X}_t=\{\bar{f}_t {X}_t+\bar{g}_t^2\nabla_{{X}_t}\log p_{T|t}({x}_T|{X}_t)\}\mathrm{d}t+\bar{g}_t\mathrm{d}{W}_t
\end{equation}

 where $\nabla_{{X}_t}\log p_{T \vert t}({x}_T\mid{X}_t) = \frac{(a_t/a_T){x}_T-{X}_t}{\sigma_t^2(\mathrm{SNR}_t/\mathrm{SNR}_T-1)}$ and $\mathrm{SNR}_t := a_t^2 / \sigma_t^2$ \citep{linqizhouDenoisingDiffusionBridge2023}. \cref{eq:h_transform} is a stochastic process that transport from $p_0 = \pi_0$ and $p_t = \pi_t$, which is a valid bridge process. To sample from the conditional distribution $p ({x}_0 \vert {x}_T)$, we can solve the reverse SDE or probability flow ODE from $t = T$ to $t = 0$:  

\begin{align}
\label{sampling_sde_ddbm}
\mathrm{d}{X}_t &=\{\bar{f}_t {X}_t +\bar{g}_t^2 ( s- h) \}\mathrm{d}t + \bar{g}_t\mathrm{d}{W}_t,  \\
\label{sampling_ode_ddbm}
\mathrm{d}{X}_t &=\{\bar{f}_t {X}_t+\bar{g}_t^2 (s - \frac{1}{2} h) \}\mathrm{d}t, 
\end{align}

where ${X}_T = {x}_T$, $s = \nabla_{{X}_t}\log p_{T|t}({x}_T|{X}_t)$, $h = \nabla_{{X}_t} \log p_{t\vert T} ({X}_t  \vert {x}_T)$.
Generally, the score $\nabla_{{x}_t}\log p_{t|T}({x}_t|{x}_T)$ in Eqs. (\ref{sampling_sde_ddbm}) and (\ref{sampling_ode_ddbm}) is intractable. However, it can be effectively estimated by denoising bridge score matching. Let $({x}_0, {x}_T) \sim \pi_{0,T} ({x}_0, {x}_T)$, ${x}_t \sim p_{t|0,T} ({x}_t \vert {x}_0, 
{x}_T)$, $t \sim \mathcal{U}{(0,T)}$, and $\omega(t)$ be non-zero loss weighting term of any choice, then the score $\nabla_{{x}_t}\log p_{T|t}({x}_T|{x}_t)$ can be approximated by a neural network ${s}_{\theta} ({x}_t, {x}_T, t)$ with denoising bridge score matching objective:

\begin{equation}
\label{dbsm}
    \mathcal{L}(\theta)=\mathbb{E}\left[w(t)\|{s}_\theta({X}_t,{x}_T,t)-\nabla_{{x}_t}\log p_{t \vert 0,T}\|^2\right].
\end{equation}

where $\mathbb{E}$ dentotes expectation over ${x}_t \sim p_{t \mid 0, T}({x}_t, {x}_0)$, $({x}_0,{x}_T) \sim \pi_{0, T}$, $t \sim \mathcal{U}(0, T)$. 

\subsection{Diffusion Bridge Implicit Models}

The transition kernel of the bridge process in \cref{eq:h_transform} is given by \citep{linqizhouDenoisingDiffusionBridge2023, zheng2024diffusion}: 

\begin{equation}
\label{eq:tk_ddbm}
    p({x}_t|{x}_0,{x}_T)=\mathcal{N}({x}_t;  \alpha_t{x}_0{+} \beta_t
{x}_T,\gamma_t^2 \mathbb{I})
\end{equation}

where $\alpha_t = a_t(1{-}\frac{\mathrm{SNR}_T}{\mathrm{SNR}_t})$, $\beta_t = \frac{a_t}{a_T}\frac{\mathrm{SNR}_T}{\mathrm{SNR}_t}$, $\gamma_t^2 = \sigma_t^2(1{-}\frac{\mathrm{SNR}_T}{\mathrm{SNR}_t})$. Suppose we sample in reverse time on the discretized timesteps $0=t_0 < t_1 < \cdots t_{N-1} < t_N = T$. Then we can sample $x_0$ by the initial value $x_T$ and the updating rule:

\begin{equation}
\label{eq:dbim_sampler}
    x_{t_n} = \alpha_{t_n} x_T + \beta_{t_n} \hat{x}_0^{\theta} + \sqrt{\gamma_{t_n}^2 - \rho_{t_n}^2}  \frac{x_{t_{n+1}} - \alpha_{t_{n+1}}x_T - \beta_{t_{n+1}} \hat{x}_0^{\theta}}{\gamma_{t_{n+1}}} + \rho_{t_n} \epsilon, \quad \epsilon \sim \mathcal{N} (0, \mathbb{I}).
\end{equation}

where $\hat{x}_0^{\theta}(x_t, x_T, t)$ has the relation with the score function:

\begin{equation}
    s_{\theta} (x_t, x_T, t) = - \frac{x_t - \alpha_t x_T - \beta_t \hat{x}_0^{\theta}(x_t, x_T, t)}{\gamma_t^2}
\end{equation}

\section{Have the bridge paths been fully explored?}

Given the forward process defined in \cref{eq:forward_equation}, diffusion bridge models \cite{linqizhouDenoisingDiffusionBridge2023, zheng2024diffusion, gushchin2025inverse, de2023augmented} utilize Doob's $h$-transform to construct a corresponding bridge process (\cref{eq:h_transform}). While the resulting process effectively bridges the initial distribution $\pi_T$ and the target distribution $\pi_0$, such diffusion bridge approaches exhibit several limitations.

\begin{itemize}
    \item \textbf{Parameter coupling.} Notice that the parameters $a_t$ and $\sigma_t$ are convolved in the transition kernel (\cref{eq:tk_ddbm}). Such coupling is unnecessary and decoupling those parameters is helpful for searching the `best' bridge path.
    \item \textbf{Limited design space.} Despite \cref{eq:h_transform} provides an infinite number of bridge paths by tuning $a_t$ and $\sigma_t$, but the space of bridge paths is still artificially restricted. 
\end{itemize}

In contrast, the stochastic interpolants \citep{albergo2023stochastic} framework allows a larger design space of bridge path with more decoupled parameters. Specifically, stochastic interpolants build a bridge path directly via the flow map:

\begin{equation}
\label{eq:flow_map}
    \phi_t = \alpha_t x_0 + \beta_t x_T + \gamma_t z
\end{equation}

where $z\sim \mathcal{N} (0, \mathbb{I})$.  \cref{eq:flow_map} builds a transport with $\pi_0$ and $\pi_T$ as boundary conditions if the kernel parameters satisfy \citep{albergo2023stochastic}: 

\begin{itemize}
    \item $\alpha_0 = \beta_T = 1$ and $\alpha_T = \beta_0 = \gamma_0 = \gamma_1= 0$;
    \item  $\alpha_t, \beta_t, \gamma_t > 0$ for $t \in (0, T)$.
\end{itemize}

The transition kernel of the stochastic interpolants in \cref{eq:flow_map} is a Gaussian distribution: $\mathcal{N} (x_t; \alpha_t x_0 + \beta_t x_T, \gamma_t^2 \mathbb{I})$. Unlike DDBM, which is parameterized by only two variables $a_t$ and $\sigma_t$, stochastic interpolants introduce decoupled parameters $\alpha_t$, $\beta_t$, and $\gamma_t$, offering a more flexible and expressive design space for constructing bridge paths.

A detailed discussion on the rationale behind the choices of $\alpha_t, \beta_t$, and  and an ablation study on the shape of $\gamma_t$ is provided in \cref{sec:design_guideline}. Notably, the DDBM-VP and DDBM-VE models presented in \citep{linqizhouDenoisingDiffusionBridge2023} can be considered as special cases by choosing different $\alpha_t$, $\beta_t$, and $\gamma_t$, see \cref{sec:reframing} for more details. In the experiments, we limit the scope to linear transition kernels and set $T=1$, i.e., $p_{t \vert 0,T}({x}_t|{x}_0,{x}_T)=\mathcal{N}({x}_t; (1-t){x}_0{+}t{x}_0,4 \gamma_{\max}^2 t(1-t) \mathbb{I})$.

\begin{summarybox}
Stochastic interpolants expands the space of bridge paths and leads to decoupled parameters compared to DDBM and DBIM.
\end{summarybox}

\section{Has the sampler space been fully explored?}

For diffusion models, EDM \cite{karras2022elucidating} demonstrated that the design of training and sampling schemes could be decoupled to significantly improve results. We now explore whether a similar decoupling is possible for bridge models, and what freedom we have to improve sampling quality with a given trained model.

\subsection{Endpoint-Conditioning for Stochastic Interpolants (ECSI)}


Given transition kernel $p_{t \mid 0, T}(x_t \mid x_0, x_T) = \mathcal{N} (x_t; \alpha_t x_0 + \beta_t x_T; \gamma_t \mathbb{I})$, we can identify the training objective \ref{eq:obj},  reverse sampling SDEs (Eq. (\ref{equation: stochasticity_control})), as demonstrated in Proposition \ref{theorem: stochasticity_control}.

\begin{proposition}[Endpoint-conditioned Stochastic Interpolants]
\label{theorem: stochasticity_control}
    Suppose the transition kernel of a diffusion bridge process is given by $p_{t \vert 0,T}({x}_t \mid {x}_0, {x}_T) = \mathcal{N}({x}_t ; \alpha_t {x}_0 + \beta_t {x}_T, \gamma_t^2 \mathbb{I})$, then the evolution of conditional probability $q_t ({X}_t \vert {x}_T)$ is given by SDE:
    \begin{equation}
    d {X}_t = 
    \label{equation: stochasticity_control}
    b(t, X_t, x_T) dt + \sqrt{2 \epsilon_t} d {W}_t,
\end{equation}

where $b(t, x_t, x_T) = \dot{\alpha}_t \hat{{x}}_0 + \dot{\beta}_t {x}_T + (\dot{\gamma}_t + \frac{\epsilon_t}{\gamma_t}) \hat{{z}}_t$, $\hat{{x}}_0 = \mathbb{E}[x_0 \mid x_t, x_T]$, $\hat{{z}}_t =: ({x}_t - \alpha_t \hat{{x}}_0 - \beta_t {x}_T) / \gamma_t$. Besides, $\hat{{x}}_0$ can be approximated by neural networks $\hat{x}_0^{\theta}$ by minimizing a regression objective with the observed $x_0, x_T$ as targets,

\begin{equation}
\label{eq:obj}
    \mathcal{L}_{0} [\hat{x}_0^\theta] = \int_0^T \mathbb{E} [\Vert \hat{x}_0^\theta(t, x_t, x_T) - x_0 \Vert_2^2] dt
\end{equation}

where  $\mathbb{E}$ denotes an expectation over $(x_0, x_T) \sim \pi(x_0, x_T)$ and $x_t \sim p_t(x_t \mid x_0, x_T)$.

\end{proposition}

{For training, we found that we could define an expanded space of bridge paths in terms of $\alpha_t, \beta_t, \gamma_t$, where $\gamma_t$ apparently controlled the \emph{stochasticity} of the path. For sampling, we see from the proposition above that the sampling design space is expanded even further, as the sampling dynamics depend on $\alpha_t, \beta_t, \gamma_t$ and $\epsilon_t$, where $\epsilon_t$ appears as an additional degree of freedom to control stochasticity. }

\textbf{Training.} \cref{eq:obj} provides the training objective of the denoiser $\hat{x}_0^{\theta} (t, x_t, x_T)$. In the implementation, we include additional preconditioning as DDBM \citep{linqizhouDenoisingDiffusionBridge2023} and DBIM \citep{zheng2024diffusion}, see \cref{sec:exp_details} for more details.

\textbf{Sampling.} We can generate samples from the conditional distribution $q_{0 \mid T} (x_0 \mid x_T)$ by solving the stochastic differential equation in Eq. (\ref{equation: stochasticity_control}) from $t=T$ to $t=0$. 

\subsection{Existing samplers are a strict subset of ECSI samplers}

We now show that existing samplers implement a strict subset of the ECSI samplers, see \Cref{fig:example}.
\paragraph{DDBM sampler.} When $\epsilon_t = 0$, Eq.~(\ref{equation: stochasticity_control}) reduces to a deterministic ODE. Setting $\epsilon_t = \gamma_t \dot{\gamma}_t - \frac{\dot{\alpha}_t}{\alpha_t}\gamma_t^2$ recovers the sampling SDE used in DDBM \citep{linqizhouDenoisingDiffusionBridge2023}. However, DDBM only provides a single reverse SDE and a single corresponding reverse ODE; it does not explore alternative choices of $\epsilon_t$.

\paragraph{DBIM sampler.} 

For small enough $\Delta t$ and $\gamma_{t - \Delta t}^2 - 2 \epsilon_t \Delta t > 0$, the sampling SDE can be discretized as:

\begin{equation}
\label{discretization_3}
     {x}_{t-\Delta t} \approx \alpha_{t - \Delta t} \hat{{x}}_0 + {\beta}_{t - \Delta t} {x}_T  + \tilde{z}
\end{equation}
where $\bar{{z}}_t \sim \mathcal{N} (0, \mathbb{I})$, $\tilde{z} = \sqrt{\gamma_{t - \Delta t}^2  -  2 \epsilon_t \Delta t}\hat{{z}}_t  +\sqrt{2 \epsilon_t \Delta t}  \bar{{z}}_t$. \cref{discretization_3} recover the DBIM sampler. 
Note that the condition $\gamma_{t-\Delta t}^2 - 2 \epsilon_t \Delta t > 0$ limits the design space of samplers. For example, our best result in the experiments is achieved by setting $\alpha_t = 1-t$, $\gamma_t = \frac{\gamma_{\max}^2}{4} t(1-t)$ and $\epsilon_t = \gamma_t \dot{\gamma}_t - \frac{\dot{\alpha}_t}{\alpha_t} \gamma_t^2$, DBIM sampler fails under this setting since $\gamma_{t-\Delta t}^2 - 2 \epsilon_t \Delta t > 0$ cannot be guaranteed all the time.

\paragraph{I$^2$SB sampler.}  When $2\epsilon_t \Delta t = \gamma_{t-\Delta t}^2 - \beta_{t-\Delta t}^2 \gamma_t^2 / \beta_t^2$, the coefficient of ${x}_T$ in Eq. (\ref{discretization_3}) vanishes. This special case corresponds to the Markovian bridge introduced in \cite{zheng2024diffusion}, and notably allows us to recover the sampling procedure of I2SB \cite{liu2023i2sb}. We provide a detailed derivation of this connection in Appendix \ref{sec:reframing}. The design space of the I$^2$SB sampler is also limited, as it can be interpreted as a special case of the DBIM sampler.

\begin{summarybox}
Endpoint-Conditioned Stochastic Interpolants (\cref{theorem: stochasticity_control}) identify a class of sampling SDEs that share the same marginal distribution, but offer greater flexibility and a broader design space for sampler construction compared to DDBM, DBIM, and I2SB.
\end{summarybox}

\subsection{Our implementation}

Our sampler based on Euler's discretization of the sampling SDE in \cref{equation: stochasticity_control}:

\begin{align} 
\label{discretization_1}
{x}_{t-\Delta t} & \approx {x}_{t} - b(t, x_t, x_T) \Delta t+\sqrt{2 \epsilon_t \Delta t}  \bar{{z}}_t, \end{align}

We set $\epsilon_t = \eta (\gamma_t \dot{\gamma}_t - \frac{\dot{\alpha}_t}{\alpha_t} \gamma_t^2)$, where $\eta \in (0, 1)$ is an interpolation parameter. This formulation provides continuous control over the sampling process, ranging from purely deterministic ODE sampling ($\eta = 0$) to fully stochastic SDE sampling ($\eta = 1$). In our implementation, we let $\epsilon_t = 0$ for the last two steps, Eq. (\ref{discretization_3}) gets reduced to: ${x}_{t - \Delta t} \approx \alpha_{t - \Delta t} \hat{{x}}_0 + {\beta}_{t - \Delta t} {x}_T  + {\gamma}_{t - \Delta t}\hat{{z}}_t$. For other steps, we apply Eq. (\ref{discretization_1}) and let $\epsilon_t = \eta (\gamma_t \dot{\gamma}_t - \frac{\dot{\alpha}_t}{\alpha_t} \gamma_t^2)$, where $\eta$ is a constant. Putting all ingredients together leads to our sampler outlined in Algorithm \ref{algo:diffusion}. 


\begin{algorithm}[tb]
\small
   \caption{ECSI Sampler}
   \label{algo:diffusion}
\begin{algorithmic}[1] 
   \STATE {\bfseries Input:} $D_\theta({x}_t, {x}_T, t)$, timesteps $\{t_j\}_{j=0}^N$, distribution $\pi_{\mathrm{cond}}$, schedule $\alpha_t, \beta_t, \gamma_t, \epsilon_t, b$
   \STATE Sample ${x}_T \sim \pi_{\mathrm{cond}}$, ${n}_0 \sim \mathcal{N}(0, b^2\mathbb{I})$
   \STATE ${x}_N = {x}_T + {n}_0$
   \FOR{$i=N$ {\bfseries to} $1$}
       \STATE $\hat{{x}}_0 = D_{\theta}({x}_i, {x}_T, t_i)$,\quad $\hat{{z}}_i = ({x}_i - \alpha_{t_i}\hat{{x}}_0 - \beta_{t_i}{x}_N)/{\gamma_{t_i}}$
       \IF{$i \geq 2$}
           \STATE Sample $\bar{{z}}_i \sim \mathcal{N}(0, \mathbb{I})$
           \STATE $d_i = \dot{\alpha}_{t_i}\hat{{x}}_0 + \dot{\beta}_{t_i}{x}_N + (\dot{\gamma}_{t_i} + \epsilon_{t_i}/\gamma_{t_i})\hat{{z}}_i$
           \STATE ${x}_{i-1} = {x}_i + d_i(t_{i}-t_{i-1}) + \sqrt{2\epsilon_{t_i}(t_{i}-t_{i-1})}\,\bar{{z}}_i$
       \ELSE
           \STATE ${x}_{i-1} = \alpha_{t_{i-1}}\hat{{x}}_0 + \beta_{t_{i-1}}{x}_N + \gamma_{t_{i-1}}\hat{{z}}_i$
       \ENDIF
   \ENDFOR
\end{algorithmic}
\end{algorithm}

\section{Is there any benefit to modifying the starting point of a bridge?}

We expanded the paths in distribution space connecting a base and target distribution, but so far left the endpoints fixed. While the target distribution should remain fixed, we could, in principle, modify the base distribution. 
At first glance this seems counter-intuitive - because of the data processing inequality we can only lose information about the target by modifying the base distribution. Hence, this angle has not been explored in the bridge literature. However, we found a surprising result - modifying the base distribution can help significantly. 
The situation is analogous to the benefits of lossy compression in VAEs~\cite{burgess2018understanding}. Information in the base distribution is not necessarily helpful, so by modifying the base distribution (which destroys some information) the model can align better with natural factors of variation.

\label{sc:preprocessing}

\subsection{Low conditional diversity in one-to-many translations}

\begin{figure}[t]
    \centering
    \includegraphics[width=.95\linewidth]{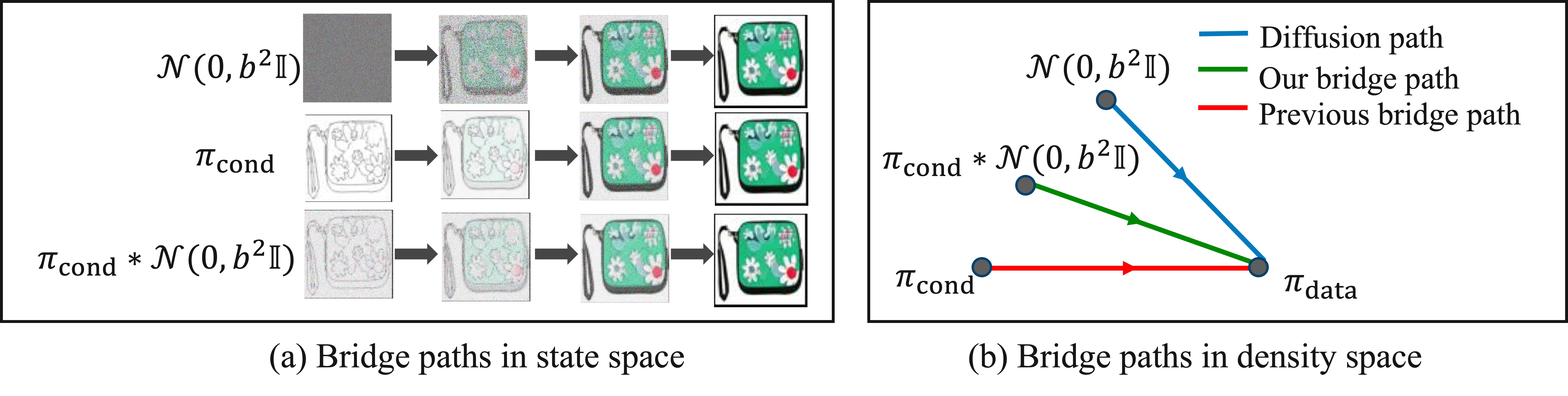}
    \caption{Modifying the base distribution corresponds to a lossy compression of the input that leads to a `trade-off' between unconditional diffusion and diffusion bridge models.}
    \label{fig:bridge_path}
\end{figure}

In our experiments (see \cref{sec:experiments}), we observe that existing diffusion bridge models tend to produce low-diversity outputs under fixed conditioning. For instance, when generating handbags from a single edge map, the model is expected to produce varied outputs in terms of color, texture, and fine details. However, we find that current bridge models generate visually similar images across different sampling runs, despite the injection of different noise realizations during the diffusion process.

To address the issue of low output diversity, we propose modifying the base distribution used in the bridge model. Prior works \citep{linqizhouDenoisingDiffusionBridge2023, albergo2023stochastic} typically treat the base distribution $\pi_T$ as equivalent to the input data distribution, denoted $\pi_{\mathrm{cond}}$. In contrast, our approach introduces a controlled perturbation by redefining the base distribution as $\pi_T = \pi_{\mathrm{cond}} * \mathcal{N}(0, b^2 \mathbb{I})$, where $b$ is a constant that governs the magnitude of noise added to the input distribution. This modification enables greater diversity in the generated outputs while maintaining conditional alignment.

Intuitively, this modification can be interpreted as a trade-off between standard diffusion models and traditional diffusion bridge models. As illustrated in \cref{fig:bridge_path}, diffusion models typically generate samples starting from pure Gaussian noise, while diffusion bridge models begin sampling from fully conditioned inputs, such as edge maps. Our approach introduces an intermediate regime by sampling from noisy conditioned inputs, thereby blending the benefits of both paradigms—preserving conditional guidance while enhancing output diversity.

\begin{summarybox}
    Modifying the base distribution with lossy compression can significantly improve the conditional diversity of the generated images.
\end{summarybox}

\subsection{How to measure the conditional diversity?}

While existing metrics like FID implicitly capture the \emph{unconditional} diversity of generated images, we need to capture the diversity of outputs (e.g. color images) for a single input image (a black and white edge map).
We propose the Average Feature Distance (AFD) metric to quantify the conditional diversity among generated images. Initially, we select a group of source images $\{ {x}_T^{(i)} \}_{i=1}^M$.  For each ${x}_T^{(i)}$, we then generate $L$ distinct target samples. The $j$-th generated sample corresponding to the $i$-th source image is denoted by ${y}_{ij}$. Then the AFD is calculated as follows:

\begin{equation}\label{eq:afd}
\mathrm{AFD} = \frac{1}{M} \sum_{i=1}^{M} \frac{1}{L^2 - L} \sum_{\substack{k, l = 1, k \neq l}}^L \left\| F({y}_{ik}) - F({y}_{il}) \right\|
\end{equation}

where $F(\cdot)$ is a function that extracts the features of images, and $\Vert \cdot \Vert$ represents Euclidean norm. Intuitively, a larger AFD indicates the better conditional diversity. Here, $F({x})$ can be ${x}$ to evaluate the diversity directly in the pixel space. Alternatively, $F(\cdot)$ can be defined using the Inception-V3 model to assess the diversity in the latent space. In our experiments, we use AFD in latent space. Furthermore, we provide additional justification for the validity of our proposed metric in \cref{sec:afd_ver}.

\section{Experiments}
\label{sec:experiments}

In this section, we demonstrate how greatly expanding the space of bridge paths with ECSI leads to significantly improved performance for I2I translation tasks, in terms of sample efficiency, image quality and conditional diversity. We evaluate on I2I translation tasks on Edges$\to$Handbags \citep{isola2017image} scaled to $64 \times 64$ pixels and DIODE-Outdoor scaled to $256 \times 256$ \citep{vasiljevic2019diode}, and Deblurring on ImageNet dataset \citep{deng2009imagenet}. For evaluation metrics, we use Fréchet Inception Distance (FID) \citep{heusel2017gans} for all experiments, and additionally measure Inception Scores (IS) \citep{barratt2018note}, Learned Perceptual Image Patch Similarity (LPIPS) \citep{zhang2018unreasonable}, Mean Square Error (MSE), following previous works \citep{zheng2024diffusion, linqizhouDenoisingDiffusionBridge2023}. In addition, we use AFD, Eq.~\ref{eq:afd}, to measure conditional diversity. Further details of the experiments and design guidelines are provided in Appendix \ref{sec:exp_details} and \ref{sec:design_guideline}.

\begin{figure*}[t]
    \centering
\includegraphics[width=.95\linewidth]{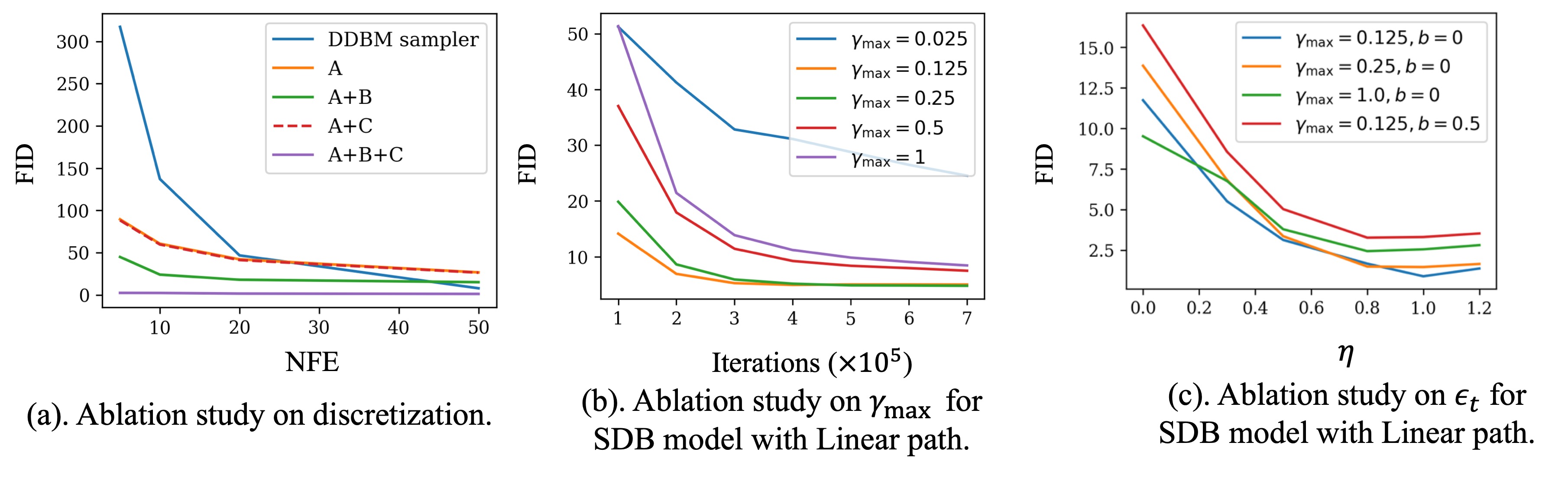}
    \caption{\textcolor{black}{Ablation studies on discretization, $\gamma_{\max}$ and $\epsilon_t$. (a). We evaluate different discretization schemes on Edges2handbags ($64 \times 64$) dataset using DDBM-VP pretrained model, A represents simple Euler discretization in Eq. (\ref{discretization_1}), B reprents setting $\epsilon_t = 0$ for the last 2 steps, C represents using Eq. (\ref{discretization_3}) for $\epsilon_t=0$. (b). Ablation study on $\gamma_{\max}$ evaluated by DIODE ($64 \times 64$) dataset. (c). Ablation study on $\epsilon_t$ through our ECSI model with Linear path on Edges2handbags ($64 \times 64$) dataset, where $\epsilon_t = \eta (\gamma_t \dot{\gamma}_t - \frac{\dot{\alpha}_t}{\alpha_t} \gamma_t^2)$.}}
    \label{fig:ablation_dis_gamma}
\end{figure*}

\begin{table*}[t]
\caption{Validation of our sampler via DDBM pretrained VP model (Evaluated by FID), where $\epsilon_t = 0.3 (\gamma_t \dot{\gamma}_t - \frac{\dot{\alpha}_t}{\alpha_t} \gamma_t^2)$.}
\label{experiment:sampler}

\begin{center}
\resizebox{0.7\textwidth}{!}{
\begin{tabular}{ccccccc}
\toprule
 \multirow{3}{*}{Sampler} &\multicolumn{3}{c}{Edges$\to$Handbags ($64 \times 64$)} & \multicolumn{3}{c}{DIODE-Outdoor ($256 \times 256$)}
 \\ 
 \cmidrule{2-7}
 &  NFE=5 & NFE=10 & NFE=20  & NFE=5 & NFE=10 & NFE=20 \\
 
\midrule
DDBM \citep{linqizhouDenoisingDiffusionBridge2023} & 317.22 & 137.15 & 46.74 & 328.33 & 151.93 & 41.03\\
DBIM \citep{zheng2024diffusion} & 3.60 & 2.46 & 1.74 & 14.25 & 7.98 & 4.99 \\
\midrule

\multirow{1}{*}{ECSI (Ours)} & \textbf{2.36} & \textbf{2.25} & \textbf{1.53} & \textbf{10.87} & \textbf{6.83} & \textbf{4.12} \\
\bottomrule
\end{tabular}}
\end{center}

\end{table*}

\textbf{Sampler}. \textcolor{black}{We evaluate different sampling algorithms in Fig. \ref{fig:ablation_dis_gamma} (a), the results demonstrate that setting $\epsilon_t = 0$ and using Eq. (\ref{discretization_3}) for the last 2 steps can significantly improve sampled image quality compared with simple Euler discretization and DDBM sampler.} Furtheremore, By specifically designing noise control during sampling, our sampler surpasses the sampling results by DDBM and DBIM with the same pretrained model. The results are demonstrated in Table \ref{experiment:sampler}. We set the number of function evaluations (NFEs) from the set $[5, 10, 20]$. 


\textbf{Bridge paths}. We introduced an extensive bridge design space and begin by focusing on linear transition paths with different strength of maximum stochasticity, i.e., $p_{t \vert 0,T}({x}_t|{x}_0,{x}_T)=\mathcal{N}({x}_t; (1 - t){x}_0{+}t{x}_T,{\frac{1}{4}\gamma_{\max}^2} t(1-t)\mathbb{I})$. \textcolor{black}{We conducted detailed ablation studies on $\gamma_{\max}$ and $\eta$ for the Linear path on DIODE ($64 \times 64$) dataset, as shown in Fig.~\ref{fig:ablation_dis_gamma}~(b) and (c). The optimal values for $\gamma_{\max}$ were found to be $0.125$ and $0.25$, while the best performance for $\eta$ was achieved with $\eta = 0.8$ and $\eta = 1.0$. Performance deteriorates when either parameter is too small or too large. Based on the results of these ablation studies, we further trained ECSI models on the Edges2handbags ($64 \times 64$) and DIODE ($256 \times 256$) datasets by taking $\gamma_{\max} \in \{0.125, 0.5\}$ and setting $\eta = 1.0$. The results are presented in Table~\ref{experiment: transition_kernel}.} Our models establish a new benchmark for image quality, as evaluated by FID, IS and LPIPS. Despite our models having slightly higher MSEs compared to the baseline DDBM and DBIM, we believe that a larger MSE indicates that the generated images are distinct from their references, suggesting a richer diversity.


\begin{table*}[t]
\caption{Quantitative results in the I2I translation task Edges2handbags ($64 \times 64$) and \textcolor{black}{DIODE ($256 \times 256$) datasets}. Our results were achieved by Linear transition kernel and setting $\eta=1$.}
\label{experiment: transition_kernel}
\footnotesize

\centering
\resizebox{0.9\textwidth}{!}{
\begin{tabular}{cccccc|cccc}
\toprule
&& \multicolumn{4}{c}{Edges$\to$handbags ($64 \times 64$)} & \multicolumn{4}{c}{DIODE-Outdoor ($256 \times 256$)} \\
\cmidrule(lr){3-6} \cmidrule(lr){7-10} 

\textbf{Model} & \textbf{NFE} & \textbf{FID $\downarrow$} & \textbf{IS $\uparrow$} & \textbf{LPIPS $\downarrow$} & \textbf{MSE} & \textbf{FID $\downarrow$} & \textbf{IS $\uparrow$} & \textbf{LPIPS $\downarrow$} & \textbf{MSE} \\ \midrule
Pix2Pix  \citep{isola2017image}& 1            & 74.8   & 3.24 & 0.356   & 0.209 & 82.4 & 4.22 & 0.556 & 0.133 \\
DDIB  \citep{su2022dual}    & $\geq 40^\dagger$ & 186.84 & 2.04 & 0.869   & 1.05 & 242.3 & 4.22 & 0.798 & 0.794 \\
SDEdit \citep{meng2021sdedit} & $\geq 40$    & 26.5   & 3.58 & 0.271   & 0.510 & 31.14 & 5.70 & 0.714 & 0.534 \\
Rectified Flow \citep{liu2022flow} & $\geq 40$ & 25.3   & 2.80 & 0.241   & 0.088 & 77.18 & 5.87 & 0.534 & 0.157 \\
$\mathrm{I}^2$SB \citep{liu2023i2sb} & $\geq 40$ & 7.43    & 3.40 & 0.244   & 0.191 & 9.34 & 5.77 & 0.373 & 0.145\\
DDBM   \citep{linqizhouDenoisingDiffusionBridge2023}    & 118          & 1.83   & 3.73 & 0.142   & 0.040 & 4.43 & 6.21 & 0.244 & 0.084\\
DBIM  \citep{zheng2024diffusion} & 20           & 1.74   & 3.64 & 0.095   & 0.005 & 4.99 & 6.10 & 0.201 & 0.017 \\
\midrule
\multirow{3}{*}{ECSI ($\gamma_{\max} = 0.125$)} & 5 & \textbf{0.89}& {4.10} &  {0.049} & 0.024 & \textcolor{black}{12.97} & \textcolor{black}{5.49} & \textcolor{black}{0.269} & \textcolor{black}{0.074} \\
& 10 & \textbf{0.67} & {4.11} & {0.045} & 0.024 & \textcolor{black}{10.12} & \textcolor{black}{5.56} & \textcolor{black}{0.255} & \textcolor{black}{0.076} 
\\
 & 20 & \textbf{0.56} & {4.11} & {0.044} & 0.024 & \textcolor{black}{8.62} & \textcolor{black}{5.62} &\textcolor{black}{0.248} & \textcolor{black}{0.078}
\\

\midrule
\multirow{3}{*}{\textcolor{black}{ECSI ($\gamma_{\max} = 0.25$)}} & \textcolor{black}{5} & \textcolor{black}{1.46} & \textcolor{black}{\textbf{4.21}}&\textcolor{black}{\textbf{0.040}}&\textcolor{black}{0.016}& \textcolor{black}{\textbf{4.16}} & \textcolor{black}{5.83} & \textcolor{black}{\textbf{0.104}} & \textcolor{black}{0.029} \\
& \textcolor{black}{10} & \textcolor{black}{1.38} &\textcolor{black}{\textbf{4.22}}&\textcolor{black}{\textbf{0.038}}&\textcolor{black}{0.017}& \textcolor{black}{\textbf{3.44}} & \textcolor{black}{5.86} & \textcolor{black}{\textbf{0.098}} & \textcolor{black}{0.029}
\\
 & \textcolor{black}{20} & \textcolor{black}{1.40} & \textcolor{black}{\textbf{4.20}}&\textcolor{black}{\textbf{0.038}}& \textcolor{black}{0.017}& \textcolor{black}{\textbf{3.27}} & \textcolor{black}{5.85} & \textcolor{black}{\textbf{0.094}} & \textcolor{black}{0.029}
\\
\bottomrule
\end{tabular}}
\end{table*}

\begin{figure}[t]
    \centering
    \includegraphics[width=0.95\linewidth]{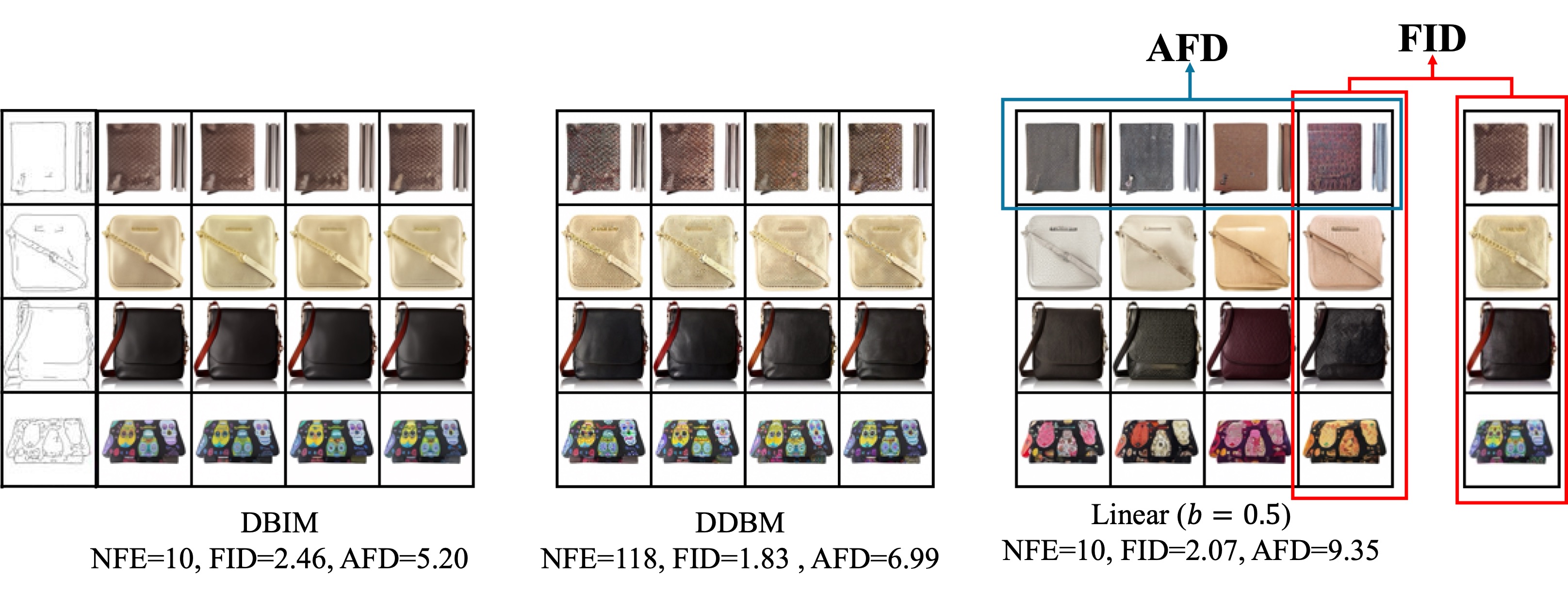}
    \caption{\textcolor{black}{Visualization of conditional diversity via sampled images in a one-to-many translation task. While FID measures diversity within columns, AFD evaluates diversity across rows. The visualization further proved the effectiveness of AFD. More sampled images can be found in Appendix \ref{sec:add_visualization}.}}
    \label{fig:diverse_images}
\end{figure}

\textbf{Modifying base distribution}. Through controlling noise in the base distribution, we achieved a more diverse set of sample images, while this diversity comes at the cost of slightly higher FID scores and slower sampling speed. We show generated images in Fig. \ref{fig:diverse_images}. More visualization can be found in Appendix \ref{sec:add_visualization}, which shows that by introducing booting noise to the input data distribution, the model can generate samples with more diverse colors and textures. Further quantitative results are presented in Table \ref{tab:conditional_diversity}, confirming that our model surpasses the vanilla DDBM in terms of image quality, sample efficiency, and conditional diversity.

\begin{table}[t]
    \centering
    \caption{Quantitative results for Different denoisers and samplers on Edges2handbags ($64\times64$). Our baseline is achieved by DDBM pretrained checkpoint and DBIM sampler. }
    \resizebox{0.9\textwidth}{!}{%
    \begin{tabular}{cccc|ccc}
    \toprule
    \multirow{2}{*}{\textbf{Method}} 
      & \multicolumn{3}{c|}{\textbf{FID} $\downarrow$} 
      & \multicolumn{3}{c}{\textbf{AFD} $\uparrow$} \\ \cmidrule(lr){2-4} \cmidrule(lr){5-7} 
      & \textbf{NFE=5} & \textbf{NFE=10} & \textbf{NFE=20}
      & \textbf{NFE=5} & \textbf{NFE=10} & \textbf{NFE=20} \\ 
    \midrule
    DDBM checkpoint + DBIM sampler                & 3.60 & 2.46 & 1.74 & 5.63 & 5.20 & 5.84 \\
    \midrule
    A: DDBM checkpoint + ECSI sampler            & 2.36 & 2.25 & 1.53 & 5.11 & 5.70 & 6.04 \\
    B: ECSI checkpoint + sampler   & \textbf{0.89} & \textbf{0.67} & \textbf{0.56} & 6.00 & 6.05 & 6.25 \\
    B + Modified base density & 3.31 & 2.07 & 1.74 & \textbf{8.53} & \textbf{9.35} & \textbf{9.65} \\
    \bottomrule
    \end{tabular}
    }
    \label{tab:conditional_diversity}
\end{table}

 \begin{table}[t]
     \caption{Deblurring results with respect to different kernels, evaluated by FID on the 10k ImageNet ($256 \times 256$) validation subset. Our results are achieved by $20$ NFEs.}
     \label{tab:deblurring}
    \centering
     \resizebox{0.72\textwidth}{!}{%
    \begin{tabular}{lcccccc}
    \toprule
    \textbf{Kernel} & \textbf{DDRM} & \textbf{DDNM} & \textbf{Pallette} & \textbf{CDSB} & \textbf{I$^2$SB} & \textbf{ECSI (ours)} \\
    \midrule
    Uniform & 9.9 & 3.0 & 4.1 & 15.5 & 3.9 & \textbf{1.11} \\
    Gaussian & 6.1 & 2.9 & 3.1 & 7.7 & 3.0 & \textbf{0.41} \\
    \bottomrule
    \end{tabular}}

\end{table}

 \textbf{Deblurring on ImageNet Dataset}. We evaluate our models for Gaussian deblurring applying
a Gaussian kernel with $\sigma = 10$ and Uniform deblurring, shown in \cref{tab:deblurring}. The results demonstrates that our ECSI models achieve much lower FID score.


\section{Related Work}

\textbf{Diffusion Bridge Models}. Diffusion bridges are faster diffusion processes that could learn the mapping between two random target distributions \citep{yifengshiDiffusionSchrOdinger2023, stefanopeluchettiDiffusionBridgeMixture2023}, demonstrating significant potential in various areas, such as protein docking \citep{somnath2023aligned}, mean-field game \citep{liu2022deep}, I2I translation \citep{liu2023i2sb, linqizhouDenoisingDiffusionBridge2023}. According to different design philosophies, DBMs can be divided into two groups: bridge matching and stochastic interpolants. The idea of bridge matching was first proposed by Peluchetti et al. \citep{peluchetti2023non}, and can be viewed as a generalization of score matching \citep{song2020score}. Based on this, diffusion Schr{\"o}dinger bridge matching (DSBM) has been developed for solving Schr{\"o}dinger bridge problems \cite{stefanopeluchettiDiffusionBridgeMixture2023,yifengshiDiffusionSchrOdinger2023}. In addition, Liu et al. \citep{liu2023i2sb} utilize bridge matching to perform image restoration tasks and noted benefits of noise empirically, the experiments shows the new model is more efficient and interpretable than score-based generative models \citep{liu2023i2sb}. Furthermore, our benchmark DDBM \citep{linqizhouDenoisingDiffusionBridge2023} achieve significant improvement for various I2I translation tasks, \textcolor{black}{DBIM \citep{zheng2024diffusion} improved the sampling algorithm for DDBM, significantly reducing sampling time while maintaining the same image quality.}

\textbf{Image-to-Image Translations}. While diffusion models are strong at generating images, applying them to image-to-image (I2I) translation is more difficult due to artifacts in the output. DiffI2I improves quality and alignment with fewer diffusion steps \citep{binxiaDiffI2IEfficientDiffusion2023}. In latent space, S2ST speeds up translation and reduces memory use \citep{orgreenbergS2STImagetoImageTranslation2023}. Other methods improve guidance using features like frequency control \citep{narektumanyanPlugandPlayDiffusionFeatures2023, hyunsooleeConditionalScoreGuidance2023, xianggaoFrequencyControlledDiffusionModel2024}. A common challenge is that many models require joint training on both source and target domains, raising privacy concerns. Injecting-Diffusion tackles this by isolating shared content for unpaired translation \citep{luyingliInjectingDiffusionInjectDomainIndependent2023}. SDDM improves interpretability by breaking down the score function across diffusion steps \citep{shurongsunSDDMScoreDecomposedDiffusion2023}.

\section{Conclusion}
\label{sec:conclusion}
We introduced Endpoint-Conditioned Stochastic Interpolants (ECSI)—an improved version of stochastic interpolants that adds endpoint conditioning, modifies the base distribution, and uses discretization to explore the design space of Diffusion Bridge Models (DBMs). We highlighted a key issue often overlooked: one-to-many image translation tasks lack conditional diversity. Our findings show that resolving this requires adjusting the starting distribution, not the path or sampler. ECSI sets new benchmarks in image quality, sampling efficiency, and conditional diversity on tasks like $64 \times 64$ edges2handbags, $256 \times 256$ DIODE-outdoor, and ImageNet deblurring. 

\textbf{Limitations}. (i) We note that optimal path design may vary by task, leaving room for future refinement. (ii) Incorporating guidance techniques may further enhance model performance.

\clearpage


\bibliography{ref}

\begin{thebibliography}{10}

\bibitem{albergo2023stochastic}
Michael~S Albergo, Nicholas~M Boffi, and Eric Vanden-Eijnden.
\newblock Stochastic interpolants: A unifying framework for flows and diffusions.
\newblock {\em arXiv preprint arXiv:2303.08797}, 2023.

\bibitem{albergo2023stochasticdatacoupling}
Michael~S Albergo, Mark Goldstein, Nicholas~M Boffi, Rajesh Ranganath, and Eric Vanden-Eijnden.
\newblock Stochastic interpolants with data-dependent couplings.
\newblock {\em arXiv preprint arXiv:2310.03725}, 2023.

\bibitem{barratt2018note}
Shane Barratt and Rishi Sharma.
\newblock A note on the inception score.
\newblock {\em arXiv preprint arXiv:1801.01973}, 2018.

\bibitem{berthelot2019mixmatch}
David Berthelot, Nicholas Carlini, Ian Goodfellow, Nicolas Papernot, Avital Oliver, and Colin~A Raffel.
\newblock Mixmatch: A holistic approach to semi-supervised learning.
\newblock {\em Advances in neural information processing systems}, 32, 2019.

\bibitem{binxiaDiffI2IEfficientDiffusion2023}
{Bin Xia}, {Yulun Zhang}, {Shiyin Wang}, {Yitong Wang}, {Xiaohong Wu}, {Yapeng Tian}, {Wenge Yang}, {Radu Timotfe}, and {Luc Van Gool}.
\newblock {{DiffI2I}}: {{Efficient Diffusion Model}} for {{Image-to-Image Translation}}.
\newblock {\em arXiv.org}, 2023.

\bibitem{burgess2018understanding}
Christopher~P. Burgess, Irina Higgins, Arka Pal, Loic Matthey, Nick Watters, Guillaume Desjardins, and Alexander Lerchner.
\newblock Understanding disentangling in {$\beta$}-{VAE}.
\newblock {\em arXiv preprint arXiv:1804.03599}, 2018.

\bibitem{chen2020big}
Ting Chen, Simon Kornblith, Kevin Swersky, Mohammad Norouzi, and Geoffrey~E Hinton.
\newblock Big self-supervised models are strong semi-supervised learners.
\newblock {\em Advances in neural information processing systems}, 33:22243--22255, 2020.

\bibitem{de2023augmented}
Valentin De~Bortoli, Guan-Horng Liu, Tianrong Chen, Evangelos~A Theodorou, and Weilie Nie.
\newblock Augmented bridge matching.
\newblock {\em arXiv preprint arXiv:2311.06978}, 2023.

\bibitem{deng2009imagenet}
Jia Deng, Wei Dong, Richard Socher, Li-Jia Li, Kai Li, and Li~Fei-Fei.
\newblock Imagenet: A large-scale hierarchical image database.
\newblock In {\em 2009 IEEE conference on computer vision and pattern recognition}, pages 248--255. Ieee, 2009.

\bibitem{dhariwal2021diffusion}
Prafulla Dhariwal and Alexander Nichol.
\newblock Diffusion models beat gans on image synthesis.
\newblock {\em Advances in neural information processing systems}, 34:8780--8794, 2021.

\bibitem{gushchin2025inverse}
Nikita Gushchin, David Li, Daniil Selikhanovych, Evgeny Burnaev, Dmitry Baranchuk, and Alexander Korotin.
\newblock Inverse bridge matching distillation.
\newblock {\em arXiv preprint arXiv:2502.01362}, 2025.

\bibitem{heusel2017gans}
Martin Heusel, Hubert Ramsauer, Thomas Unterthiner, Bernhard Nessler, and Sepp Hochreiter.
\newblock Gans trained by a two time-scale update rule converge to a local nash equilibrium.
\newblock {\em Advances in neural information processing systems}, 30, 2017.

\bibitem{ho2020denoising}
Jonathan Ho, Ajay Jain, and Pieter Abbeel.
\newblock Denoising diffusion probabilistic models.
\newblock {\em Advances in neural information processing systems}, 33:6840--6851, 2020.

\bibitem{hyunsooleeConditionalScoreGuidance2023}
{Hyunsoo Lee}, {Minsoo Kang}, and {Bohyung Han}.
\newblock Conditional {{Score Guidance}} for {{Text-Driven Image-to-Image Translation}}.
\newblock {\em Neural Information Processing Systems}, 2023.

\bibitem{isola2017image}
Phillip Isola, Jun-Yan Zhu, Tinghui Zhou, and Alexei~A Efros.
\newblock Image-to-image translation with conditional adversarial networks.
\newblock In {\em Proceedings of the IEEE conference on computer vision and pattern recognition}, pages 1125--1134, 2017.

\bibitem{karras2022elucidating}
Tero Karras, Miika Aittala, Timo Aila, and Samuli Laine.
\newblock Elucidating the design space of diffusion-based generative models.
\newblock {\em Advances in neural information processing systems}, 35:26565--26577, 2022.

\bibitem{linqizhouDenoisingDiffusionBridge2023}
{Linqi Zhou}, {Aaron Lou}, {Samar Khanna}, and {Stefano Ermon}.
\newblock Denoising {{Diffusion Bridge Models}}.
\newblock {\em arXiv.org}, 2023.

\bibitem{lipman2022flow}
Yaron Lipman, Ricky~TQ Chen, Heli Ben-Hamu, Maximilian Nickel, and Matt Le.
\newblock Flow matching for generative modeling.
\newblock {\em arXiv preprint arXiv:2210.02747}, 2022.

\bibitem{liu2022deep}
Guan-Horng Liu, Tianrong Chen, Oswin So, and Evangelos Theodorou.
\newblock Deep generalized schr{\"o}dinger bridge.
\newblock {\em Advances in Neural Information Processing Systems}, 35:9374--9388, 2022.

\bibitem{liu2023i2sb}
Guan-Horng Liu, Arash Vahdat, De-An Huang, Evangelos~A Theodorou, Weili Nie, and Anima Anandkumar.
\newblock I{$^2$}sb: Image-to-image schr{\"o}dinger bridge.
\newblock {\em arXiv preprint arXiv:2302.05872}, 2023.

\bibitem{liu2022flow}
Xingchao Liu, Chengyue Gong, and Qiang Liu.
\newblock Flow straight and fast: Learning to generate and transfer data with rectified flow.
\newblock {\em arXiv preprint arXiv:2209.03003}, 2022.

\bibitem{lu2024simplifying}
Cheng Lu and Yang Song.
\newblock Simplifying, stabilizing and scaling continuous-time consistency models.
\newblock {\em arXiv preprint arXiv:2410.11081}, 2024.

\bibitem{luyingliInjectingDiffusionInjectDomainIndependent2023}
{Luying Li} and {Lizhuang Ma}.
\newblock Injecting-{{Diffusion}}: {{Inject Domain-Independent Contents}} into {{Diffusion Models}} for {{Unpaired Image-to-Image Translation}}.
\newblock {\em IEEE International Conference on Multimedia and Expo}, 2023.

\bibitem{meng2021sdedit}
Chenlin Meng, Yutong He, Yang Song, Jiaming Song, Jiajun Wu, Jun-Yan Zhu, and Stefano Ermon.
\newblock Sdedit: Guided image synthesis and editing with stochastic differential equations.
\newblock {\em arXiv preprint arXiv:2108.01073}, 2021.

\bibitem{narektumanyanPlugandPlayDiffusionFeatures2023}
{Narek Tumanyan}, {Michal Geyer}, {Shai Bagon}, and {Tali Dekel}.
\newblock Plug-and-{{Play Diffusion Features}} for {{Text-Driven Image-to-Image Translation}}.
\newblock {\em Computer Vision and Pattern Recognition}, 2023.

\bibitem{orgreenbergS2STImagetoImageTranslation2023}
{Or Greenberg}, {Eran Kishon}, and {Dani Lischinski}.
\newblock {{S2ST}}: {{Image-to-Image Translation}} in the {{Seed Space}} of {{Latent Diffusion}}.
\newblock {\em arXiv.org}, 2023.

\bibitem{peluchetti2023non}
Stefano Peluchetti.
\newblock Non-denoising forward-time diffusions.
\newblock {\em arXiv preprint arXiv:2312.14589}, 2023.

\bibitem{rogers2000diffusions}
L~Chris~G Rogers and David Williams.
\newblock {\em Diffusions, Markov processes, and martingales: It{\^o} calculus}, volume~2.
\newblock Cambridge university press, 2000.

\bibitem{shurongsunSDDMScoreDecomposedDiffusion2023}
{Shurong Sun}, {Longhui Wei}, {Junliang Xing}, {Jia Jia}, and {Qi Tian}.
\newblock {{SDDM}}: {{Score-Decomposed Diffusion Models}} on {{Manifolds}} for {{Unpaired}} {{Image-to-Image Translation}}.
\newblock {\em International Conference on Machine Learning}, 2023.

\bibitem{sohn2020fixmatch}
Kihyuk Sohn, David Berthelot, Nicholas Carlini, Zizhao Zhang, Han Zhang, Colin~A Raffel, Ekin~Dogus Cubuk, Alexey Kurakin, and Chun-Liang Li.
\newblock Fixmatch: Simplifying semi-supervised learning with consistency and confidence.
\newblock {\em Advances in neural information processing systems}, 33:596--608, 2020.

\bibitem{somnath2023aligned}
Vignesh~Ram Somnath, Matteo Pariset, Ya-Ping Hsieh, Maria~Rodriguez Martinez, Andreas Krause, and Charlotte Bunne.
\newblock Aligned diffusion schr{\"o}dinger bridges.
\newblock In {\em Uncertainty in Artificial Intelligence}, pages 1985--1995. PMLR, 2023.

\bibitem{song2019generative}
Yang Song and Stefano Ermon.
\newblock Generative modeling by estimating gradients of the data distribution.
\newblock {\em Advances in neural information processing systems}, 32, 2019.

\bibitem{song2020score}
Yang Song, Jascha Sohl-Dickstein, Diederik~P Kingma, Abhishek Kumar, Stefano Ermon, and Ben Poole.
\newblock Score-based generative modeling through stochastic differential equations.
\newblock {\em arXiv preprint arXiv:2011.13456}, 2020.

\bibitem{stefanopeluchettiDiffusionBridgeMixture2023}
{Stefano Peluchetti}.
\newblock Diffusion {{Bridge Mixture Transports}}, {{Schr}}\textbackslash "odinger {{Bridge Problems}} and {{Generative Modeling}}.
\newblock {\em arXiv.org}, 2023.

\bibitem{su2022dual}
Xuan Su, Jiaming Song, Chenlin Meng, and Stefano Ermon.
\newblock Dual diffusion implicit bridges for image-to-image translation.
\newblock {\em arXiv preprint arXiv:2203.08382}, 2022.

\bibitem{vasiljevic2019diode}
Igor Vasiljevic, Nick Kolkin, Shanyi Zhang, Ruotian Luo, Haochen Wang, Falcon~Z Dai, Andrea~F Daniele, Mohammadreza Mostajabi, Steven Basart, Matthew~R Walter, et~al.
\newblock Diode: A dense indoor and outdoor depth dataset.
\newblock {\em arXiv preprint arXiv:1908.00463}, 2019.

\bibitem{xianggaoFrequencyControlledDiffusionModel2024}
{Xiang Gao}, {Zhengbo Xu}, {Junhan Zhao}, and {Jiaying Liu}.
\newblock Frequency-{{Controlled Diffusion Model}} for {{Versatile Text-Guided Image-to-Image Translation}}.
\newblock {\em AAAI Conference on Artificial Intelligence}, 2024.

\bibitem{yifengshiDiffusionSchrOdinger2023}
{Yifeng Shi}, {Valentin De Bortoli}, {Andrew T. Campbell}, and {Arnaud Doucet}.
\newblock Diffusion {{Schr}}\textbackslash "odinger {{Bridge Matching}}.
\newblock {\em Neural Information Processing Systems}, 2023.

\bibitem{zbontar2021barlow}
Jure Zbontar, Li~Jing, Ishan Misra, Yann LeCun, and St{\'e}phane Deny.
\newblock Barlow twins: Self-supervised learning via redundancy reduction.
\newblock In {\em International conference on machine learning}, pages 12310--12320. PMLR, 2021.

\bibitem{zhang2018unreasonable}
Richard Zhang, Phillip Isola, Alexei~A Efros, Eli Shechtman, and Oliver Wang.
\newblock The unreasonable effectiveness of deep features as a perceptual metric.
\newblock In {\em Proceedings of the IEEE conference on computer vision and pattern recognition}, pages 586--595, 2018.

\bibitem{zheng2024diffusion}
Kaiwen Zheng, Guande He, Jianfei Chen, Fan Bao, and Jun Zhu.
\newblock Diffusion bridge implicit models.
\newblock {\em arXiv preprint arXiv:2405.15885}, 2024.

\end{thebibliography}
\bibliographystyle{plain}

\newpage
\appendix
\onecolumn

{
\color{black}
\section{AFD validation}
\label{sec:afd_ver}

In this section, we thoroughly validate the effectiveness of our proposed metric, AFD, for measuring conditional diversity and demonstrate its role as a complementary metric to FID.
In unconditional generation scenarios, the FID is widely used to evaluate the diversity of generated images. While low FID scores generally indicate high diversity across the entire dataset, they do not necessarily imply high conditional diversity. For instance, we observed that samples generated by the DDBM model often lack diversity when conditioned on edge images, despite achieving very low FID scores. To address this limitation, we introduce the concept of conditional diversity and propose a corresponding metric to quantify it.

The first question is why FID failed to measure the conditional diversity. To illustrate the limitations of FID in capturing conditional diversity, consider an extreme case: if the images generated by a generative model are identical to a set of baseline images, the FID score can be very low since the two distributions are indistinguishable. However, this scenario does not reflect diversity within the conditional outputs. 

To further support our point, we designed two classes of pseudo-generative models capable of controlling the diversity of the generated images, which are further validated by FID and AFD. The experiments are evaluated on Imagenet dataset \citep{deng2009imagenet}.

\subsection{Pseudo-generative models by random selection}
We designed four pseudo-generative models: ImageNet-1-mode, ImageNet-2-modes, ImageNet-5-modes, and ImageNet-10-modes. The experimental setup is as follows:

\begin{itemize}
    \item We selected 11,000 samples from the ImageNet validation dataset, randomly choosing 11 images per class.
    \item From these, we designated 1,000 images as the "real" set, while the remaining images served as the source pool for the generative models.
    \item Each ImageNet-k-modes model simulates a generative process by randomly sampling images from a pool of $k$ distinct images within a given class.

\end{itemize}

We present sampled images in Fig. \ref{fig:imagenet}, where it is evident that the ImageNet-10-modes model generates images with the highest conditional diversity. To quantify this, we conducted experiments to calculate both FID and AFD for the four generative models. The results are summarized in Table \ref{tab:eva_imagenet}. While the FID scores are nearly identical across all models, the AFD values increase as the conditional diversity of the generative models improves. This highlights that AFD is a more effective metric for capturing conditional diversity than FID.

\begin{table}[t]
    \centering
    \caption{\textcolor{black}{Evaluation for generative models:  ImageNet-1-mode, ImageNet-2-modes, ImageNet-5-modes, and ImageNet-10-modes.}}
    \label{tab:eva_imagenet}
    \begin{tabular}{ccccc}
    \midrule
     \textcolor{black}{Model} & \textcolor{black}{ImageNet-1-mode}   & \textcolor{black}{ImageNet-2-modes} & \textcolor{black}{ImageNet-5-modes} & \textcolor{black}{ImageNet-10-modes} \\ \midrule
     \textcolor{black}{FID}& \textcolor{black}{58.30}   & \textcolor{black}{57.34} & \textcolor{black}{57.78} & \textcolor{black}{57.26}\\
     \textcolor{black}{AFD} & \textcolor{black}{0} & \textcolor{black}{8.14} & \textcolor{black}{12.84} & \textcolor{black}{14.47} \\
         \midrule
    \end{tabular}
    
\end{table}

\subsection{Pseudo-generative models by strong augmentation}

\begin{figure}[t]
    \centering
    \includegraphics[width=0.8\linewidth]{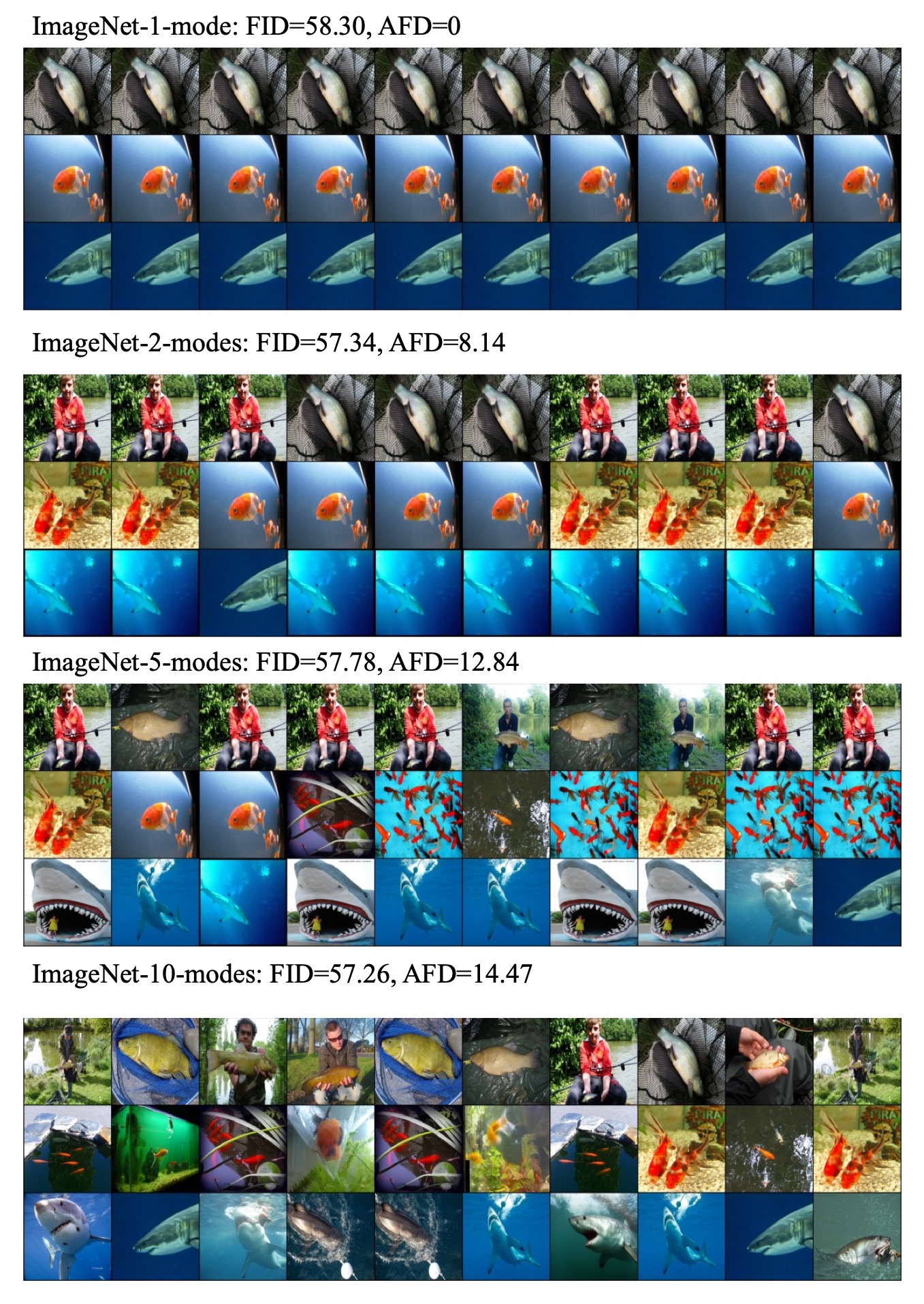}
    \caption{\textcolor{black}{Sampled images from 4 generative models: ImageNet-1-mode, ImageNet-2-modes, ImageNet-5-modes, ImageNet-10-modes.}}
    \label{fig:imagenet}
\end{figure}

Strong augmentation has been widely used in computer vision to generate synthetic data while preserving its underlying semantics \citep{chen2020big, zbontar2021barlow, sohn2020fixmatch, berthelot2019mixmatch}. The intensity of augmentation can be adjusted, with higher intensities producing more diverse images. To further validate our proposed metric, AFD, as a measure of diversity, we construct pseudo-generative models using strong augmentation.

We selected 1,000 images from the ImageNet-1k dataset, one from each category. These images were subjected to data augmentation, specifically using ColorJitter, with varying magnitudes to enhance diversity. For each image, the augmentation was applied 16 times, creating an augmented dataset for each magnitude setting. We then calculated the AFD for these augmented datasets to evaluate the relationship between dataset diversity (as influenced by augmentation magnitude) and the AFD value.

Table \ref{tab:afd_verification} summarizes the AFD results across various augmentation magnitude settings. The results show that as diversity increases, AFD values also rise, further confirming that the proposed AFD metric is a reliable indicator of image diversity.


\begin{table}[t]
	\caption{\textcolor{black}{AFD results across different augmentation magnitudes}}
	    \centering
		\begin{tabular}{ccccccccc}
            \midrule
            \textcolor{black}{\textbf{Augmentation magnitude}} & \textcolor{black}{0.1} & \textcolor{black}{0.2} & \textcolor{black}{0.3} & \textcolor{black}{0.4} & \textcolor{black}{0.5} & \textcolor{black}{0.6} & \textcolor{black}{0.7} & \textcolor{black}{0.8} \\
            \midrule
            \textcolor{black}{\textbf{AFD}} & \textcolor{black}{2.16} & \textcolor{black}{3.77} & \textcolor{black}{5.13} & \textcolor{black}{6.16} & \textcolor{black}{6.98} & \textcolor{black}{7.63} & \textcolor{black}{8.22} & \textcolor{black}{9.01} \\
            \textcolor{black}{\textbf{FID}} & \textcolor{black}{0.20} & \textcolor{black}{2.95} & \textcolor{black}{7.02} & \textcolor{black}{11.62} &	\textcolor{black}{16.33} & \textcolor{black}{20.84} & \textcolor{black}{25.12} & \textcolor{black}{28.89} \\
            \midrule
		\end{tabular}%
	\label{tab:afd_verification}
\end{table} 

}

\section{Proofs}
\label{sec:proofs}
\setcounter{theorem}{0}

There are infinitely many pinned processes characterized by the Gaussian transition kernel $p_{t \vert 0,T}(\mathbf{x}_t \mid \mathbf{x}_0, \mathbf{x}_T) = \mathcal{N}(\mathbf{x}_t ; \alpha_t \mathbf{x}_0 + \beta_t \mathbf{x}_T, \gamma_t^2 \mathbb{I})$. Specifically, we formalize the pinned process as a linear Itô SDE, as presented in Lemma \ref{lemma: sde_kernel}.

\begin{lemma}
\label{lemma: sde_kernel}
There exist a linear Itô SDE 
\begin{equation}
\label{equ:linear_ito_sde}
    d \mathbf{X}_t = [f_t \mathbf{X}_t + s_t \mathbf{x}_T] dt + g_t d \mathbf{W}_t, \quad \mathbf{X}_0 = \mathbf{x}_0, 
\end{equation}

where $f_t = \frac{\dot{\alpha}_t}{\alpha_t}, \quad s_t = \dot{\beta}_t - \frac{\dot{\alpha}_t}{\alpha_t} \beta_t, \quad g_t = \sqrt{2 (\gamma_t \dot{\gamma}_t - \frac{\dot{\alpha}_t}{\alpha_t} \gamma_t^2)}$, that has a Gaussian marginal distribution $\mathcal{N}\left(\mathbf{x}_t ; \alpha_t \mathbf{x}_0 + \beta_t \mathbf{x}_T, \gamma_t^2 \mathbb{I}\right)$.
\end{lemma}

\begin{proof}

Let $\mathbf{m}_t$ denote the mean function of the given Itô SDE, 
then we have $\frac{d \mathbf{m}_t}{dt} = f_t \mathbf{m}_t + s_t \mathbf{x}_T$. Given the transition kernel, the mean function $\mathbf{m}_t = \alpha_t \mathbf{x}_0 + \beta_t \mathbf{x}_T$, therefore,

\begin{equation}
    \dot{\alpha_t} \mathbf{x}_0 + \dot{\beta_t} \mathbf{x}_T = f_t (\alpha_t \mathbf{x}_0 + \beta_t \mathbf{x}_T ) + s_t \mathbf{x}_T.
\end{equation}

Matching the above equation: 
\begin{equation}
    f_t = \frac{\dot{\alpha}_t}{\alpha_t}, s_t = \dot{\beta}_t - \beta_t \frac{\dot{\alpha}_t}{\alpha_t}.
\end{equation}

Further, For the variance $\gamma_t^2$ of the process, the dynamics are given by:

\begin{equation}
    \frac{d \gamma_t^2}{dt} = 2f_t \gamma_t^2 + g_t^2.
\end{equation}

Solving for $g_t^2$, we substitute $f_t = \frac{\dot{\alpha}_t}{\alpha_t}$:

\begin{equation}
    g_t^2 = \frac{d \gamma_t^2}{dt} - 2 \frac{\dot{\alpha}_t}{\alpha_t} \gamma_t^2
\end{equation}

Therefore,

\begin{equation}
    g_t = \sqrt{2 (\gamma_t \dot{\gamma}_t - \frac{\dot{\alpha}_t}{\alpha_t} \gamma_t^2)}.
\end{equation}

\end{proof}

Given the pinned process (\ref{equ:linear_ito_sde}), we can sample from the conditional distribution $p_{0 \vert T}(\mathbf{x}_0 \vert \mathbf{x}_T)$ by solving the reverse SDE or ODE from $t = T$ to $t = 0$:

\begin{align}
\label{reverse_SDE_for_bridge}
    d \mathbf{X}_t &= \left[ f_t \mathbf{X}_t + s_t \mathbf{x}_T - g_t^2 \nabla_{\mathbf{X}_t} \log p_t(\mathbf{X}_t \vert \mathbf{x}_T) \right] dt + g_t d \mathbf{W}_t, \quad \mathbf{X}_T = \mathbf{x}_T, \\
    \label{reverse_ODE_for_bridge}
     d \mathbf{X}_t &= \left[ f_t \mathbf{X}_t + s_t \mathbf{x}_T - \frac{1}{2} g_t^2 \nabla_{\mathbf{X}_t} \log p_t(\mathbf{X}_t \vert \mathbf{x}_T) \right] dt \quad \mathbf{X}_T = \mathbf{x}_T,
\end{align}

where the score $\nabla_{\mathbf{X}_t} \log p_t(\mathbf{X}_t \vert \mathbf{x}_T)$ can be estimated by score matching objective (\ref{dbsm}). 





For dynamics described by ODE $d\mathbf{X}_t = \mathbf{u}_t dt$, we can identify the entire class of SDEs that maintain the same marginal distributions, as detailed in Lemma \ref{lemma: stochasticity_control}. This enables us to control the noise during sampling by appropriately designing $\epsilon_t$. 

\begin{lemma}
\label{lemma: stochasticity_control}
Consider a continuous dynamics given by ODE of the form: $d\mathbf{X}_t = \mathbf{u}_t dt$, with the density evolution $p_t (\mathbf{X}_t)$. Then there exists forward SDEs and backward SDEs that match the marginal distribution $p_t$. The forward SDEs are given by: $d\mathbf{X}_t = (\mathbf{u}_t + \epsilon_t \nabla \log p_t) dt + \sqrt{2 \epsilon_t} d \mathbf{W}_t, \epsilon_t > 0$. The backward SDEs are given by: $d\mathbf{X}_t = (\mathbf{u}_t - \epsilon_t \nabla \log p_t) dt + \sqrt{2 \epsilon_t} d \mathbf{W}_t, \epsilon_t > 0$.

\end{lemma}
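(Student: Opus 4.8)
The plan is to show that all three processes solve the \emph{same} Fokker--Planck equation with matching (initial or terminal) data, and then conclude equality of marginals by uniqueness for that linear PDE. First I would write down the continuity equation obeyed by the ODE flow: if $\mathbf{X}_0 \sim p_0$ and $d\mathbf{X}_t = \mathbf{u}_t\, dt$, then $p_t$ satisfies $\partial_t p_t = -\nabla\cdot(\mathbf{u}_t p_t)$. This transport equation is the target that the two candidate SDEs must reproduce.

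Second, I would recall the Fokker--Planck equation of a forward It\^o SDE $d\mathbf{X}_t = \mathbf{b}_t\, dt + \sqrt{2\epsilon_t}\, d\mathbf{W}_t$, namely $\partial_t p_t = -\nabla\cdot(\mathbf{b}_t p_t) + \epsilon_t \Delta p_t$, and use the elementary identity $\epsilon_t \Delta p_t = \nabla\cdot(\epsilon_t p_t \nabla\log p_t)$ (valid where $p_t > 0$, since $\nabla p_t = p_t \nabla\log p_t$) to turn the diffusion term into a divergence. Plugging in $\mathbf{b}_t = \mathbf{u}_t + \epsilon_t \nabla\log p_t$ makes the right-hand side collapse to $-\nabla\cdot\big((\mathbf{b}_t - \epsilon_t\nabla\log p_t)p_t\big) = -\nabla\cdot(\mathbf{u}_t p_t)$, i.e.\ exactly the transport equation; since this SDE is also launched from $p_0$, uniqueness of solutions of the (linear) Fokker--Planck equation forces its time-$t$ law to equal $p_t$ for all $t$. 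For the backward SDE, I would use that a reverse-time diffusion $d\mathbf{X}_t = \mathbf{b}_t\, dt + \sqrt{2\epsilon_t}\, d\bar{\mathbf{W}}_t$ integrated from $t = T$ downward obeys $\partial_t p_t = -\nabla\cdot(\mathbf{b}_t p_t) - \epsilon_t \Delta p_t$ (the sign of the Laplacian flips; this is most easily seen by passing to the time-reversed process $\mathbf{Y}_s = \mathbf{X}_{T-s}$, which is a genuine forward SDE). Repeating the same rewriting with the flipped sign now forces $\mathbf{b}_t = \mathbf{u}_t - \epsilon_t\nabla\log p_t$, and matching the terminal law $\mathbf{X}_T \sim p_T$ plus uniqueness run backward in time gives the marginals $p_t$.

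The main obstacle I anticipate is not the computation but the regularity bookkeeping needed to make it rigorous: one wants $p_t > 0$ with enough smoothness and tail decay for $\nabla\log p_t$ to be well defined and the divergence manipulations to be legitimate, and one needs well-posedness/uniqueness of the Fokker--Planck equation in the relevant function class (e.g.\ under boundedness and Lipschitz regularity of $\mathbf{u}_t$ and $\epsilon_t$, finite second moments, or else a weak/distributional formulation). The one place where the forward and backward cases genuinely diverge is the sign of the Laplacian in the reverse-time Fokker--Planck equation, which is precisely what turns $+\epsilon_t\nabla\log p_t$ into $-\epsilon_t\nabla\log p_t$ in the drift; everything else is symmetric, so once the forward case is written out the backward case is essentially a mirror image.
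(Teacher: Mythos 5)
Your argument is exactly the paper's: write the Fokker--Planck equation for each candidate SDE, use $\nabla p_t = p_t \nabla \log p_t$ to fold the diffusion term into the drift divergence (with the sign of the Laplacian flipped for the reverse-time case), and observe that both collapse to the continuity equation $\partial_t p_t = -\nabla\cdot(\mathbf{u}_t p_t)$ of the ODE. Your additional remarks on positivity/regularity of $p_t$ and uniqueness of the Fokker--Planck solution are appropriate caveats that the paper leaves implicit, but the substance of the proof is the same.
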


\begin{proof}

For the forward SDEs, the Fokker-Planck equations are given by:

\begin{align}
    \frac{\partial p_t(\mathbf{X}_t)}{\partial t} &= -\nabla \cdot \left[\left(\mathbf{u}_t + \epsilon_t \nabla \log p_t\right) p_t(\mathbf{X}_t)\right] + \epsilon_t \nabla^2 p_t(\mathbf{X}_t) \\
    &= -\nabla \cdot \left[\mathbf{u}_t p_t(\mathbf{X}_t)\right] - \nabla \cdot \left[\epsilon_t (\nabla \log p_t) p_t(\mathbf{X}_t)\right] + \epsilon_t \nabla^2 p_t(\mathbf{X}_t) \\
    &= -\nabla \cdot \left[\mathbf{u}_t p_t(\mathbf{X}_t)\right] - \epsilon_t \nabla \cdot \left[ \nabla p_t (\mathbf{X}_t) \right] + \epsilon_t \nabla^2 p_t(\mathbf{X}_t) \\ &= -\nabla \cdot \left[\mathbf{u}_t p_t(\mathbf{X}_t)\right].
\end{align}

This is exactly the Fokker-Planck equation for the original deterministic ODE $ d\mathbf{X}_t = \mathbf{u}_t \, dt $. Therefore, the forward SDE maintains the same marginal distribution $ p_t(\mathbf{X}_t) $ as the original ODE.

Now consider the backward SDEs, the Fokker-Planck equations become:

\begin{align}
    \frac{\partial p_t(\mathbf{X}_t)}{\partial t} &= -\nabla \cdot \left[\left(\mathbf{u}_t - \epsilon_t \nabla \log p_t\right) p_t(\mathbf{X}_t)\right] - \epsilon_t \nabla^2 p_t(\mathbf{X}_t) \\
    &= -\nabla \cdot \left[\mathbf{u}_t p_t(\mathbf{X}_t)\right] + \nabla \cdot \left[\epsilon_t (\nabla \log p_t) p_t(\mathbf{X}_t)\right] - \epsilon_t \nabla^2 p_t(\mathbf{X}_t) \\
    &= -\nabla \cdot \left[\mathbf{u}_t p_t(\mathbf{X}_t)\right].
\end{align}

This is again the Fokker-Planck equation corresponding to the original deterministic ODE $d\mathbf{X}_t = \mathbf{u}_t \, dt$. Therefore, the backward SDE also maintains the same marginal distribution $p_t(\mathbf{X}_t)$.

\end{proof}

\begin{lemma}

    Let $(\mathbf{x}_0, \mathbf{x}_T) \sim \pi_0(\mathbf{x}_0, \mathbf{x}_T)$, $\mathbf{x}_t \sim p_t(\mathbf{x} \vert \mathbf{x}_0, \mathbf{x}_T)$, 
    Given the transition kernel: $p(\mathbf{x}_t \mid \mathbf{x}_0, \mathbf{x}_T)=\mathcal{N}\left(\mathbf{x}_t ; \alpha_t \mathbf{x}_0 + \beta_t \mathbf{x}_T, \gamma_t^2 \mathbb{I}\right)$, if $\hat{\mathbf{x}}_0(\mathbf{x}_t, \mathbf{x}_T, t)$ is a denoiser function that minimizes the expected $L_2$ denoising error for samples drawn from $\pi_0(\mathbf{x}_0, \mathbf{x}_T)$:

\begin{equation}
    \hat{\mathbf{x}}_0(\mathbf{x}_t, \mathbf{x}_T, t) = \arg \min_{D(\mathbf{x}_t, \mathbf{x}_T, t) }\mathbb{E}_{\mathbf{x}_0, \mathbf{x}_T, \mathbf{x}_t} \left[\lambda(t) \|D(\mathbf{x}_t, \mathbf{x}_T, t)-\mathbf{x}_0 \|_2^2 \right],
\end{equation}

then the score has the following relationship with $\hat{\mathbf{x}}_0(\mathbf{x}_t, \mathbf{x}_T, t)$:

\begin{align}
\label{eq:score_repara}
    \nabla _{\mathbf{x}_t} \log p_t(\mathbf{x}_t \vert \mathbf{x}_T) =  \frac{\alpha_t \hat{\mathbf{x}}_0(\mathbf{x}_t, \mathbf{x}_T, t) + \beta_t \mathbf{x}_T- \mathbf{x}_t}{\gamma_t^2}. 
\end{align}

\end{lemma}

\begin{proof}

\begin{align} \mathcal{L}(D) & = \mathbb{E}_{(\mathbf{x}_0, \mathbf{x}_T) \sim \pi_0(\mathbf{x}_0, \mathbf{x}_T)} \mathbb{E}_{\mathbf{x}_t \sim p_t(\mathbf{x}_t \vert \mathbf{x}_0, \mathbf{x}_T)} \|D(\mathbf{x}_t)-\mathbf{x}_0 \|_2^2 \\ 
&=\int_{\mathbb{R}^d} \int_{\mathbb{R}^d} \underbrace{\int_{\mathbb{R}^d} p_t(\mathbf{x}_t \vert \mathbf{x}_0, \mathbf{x}_T)  \pi_0(\mathbf{x}_0, \mathbf{x}_T) \|D(\mathbf{x}_t )- \mathbf{x}_0 \|_2^2   \mathrm{~d}\mathbf{x}_0 }_{=: \mathcal{L}(D ; \mathbf{x}_t, \mathbf{x}_T)} \mathrm{~d} \mathbf{x}_T\mathrm{d} \mathbf{x}_t,
\end{align}

\begin{equation}
    \mathcal{L}(D ; \mathbf{x}_t, \mathbf{x}_T)= \int_{\mathbb{R}^d} p_t(\mathbf{x}_t \vert \mathbf{x}_0, \mathbf{x}_T)  \pi_0(\mathbf{x}_0, \mathbf{x}_T) \|D(\mathbf{x}_t )-\mathbf{x}_0\|_2^2 \mathrm{~d} \mathbf{x}_0,
\end{equation}

we can minimize $\mathcal{L}(D)$ by minimizing $\mathcal{L}(D ; \mathbf{x}_t, \mathbf{x}_T)$independently for each $\{\mathbf{x}_t, \mathbf{x}_T \}$ pair.

\begin{equation}
    D^*(\mathbf{x}_t, \mathbf{x}_T) =\arg \min _{D(\mathbf{x}_t)} \mathcal{L}(D ; \mathbf{x}_t, \mathbf{x}_T)
\end{equation}

 \begin{align}
\mathbf{0} &=\nabla_{D(\mathbf{x}_t, \mathbf{x}_T)}[\mathcal{L}(D ; \mathbf{x}_t, \mathbf{x}_T)]\\
&= \int_{\mathbb{R}^d}  p_t(\mathbf{x}_t \vert \mathbf{x}_0, \mathbf{x}_T)  \pi_0(\mathbf{x}_0, \mathbf{x}_T)  2[D(\mathbf{x}, \mathbf{x}_T)- \mathbf{x}_0]  \mathrm{~d} \mathbf{x}_0 \\
&= 2 [D(\mathbf{x}_t, \mathbf{x}_T)  \int_{\mathbb{R}^d}  p_t(\mathbf{x}_t \vert \mathbf{x}_0, \mathbf{x}_T)  \pi_0(\mathbf{x}_0, \mathbf{x}_T)  \mathrm{~d} \mathbf{x}_0 - \int_{\mathbb{R}^d}  p_t(\mathbf{x}_t \vert \mathbf{x}_0, \mathbf{x}_T)  \pi_0(\mathbf{x}_0, \mathbf{x}_T)   \mathbf{x}_0 \mathrm{~d} \mathbf{x}_0] \\
&= 2 [D(\mathbf{x})p_t(\mathbf{x}_t, \mathbf{x}_T)  - \int_{\mathbb{R}^d}  p_t(\mathbf{x}_t \vert \mathbf{x}_0, \mathbf{x}_T)  \pi_0(\mathbf{x}_0, \mathbf{x}_T)  \mathbf{x}_0  \mathrm{~d} \mathbf{x}_0],
\end{align}

\begin{equation}
    D^*(\mathbf{x}_t, \mathbf{x}_T) =   \int_{\mathbb{R}^d} \frac{ p_t(\mathbf{x}_t \vert \mathbf{x}_0, \mathbf{x}_T)  \pi_0(\mathbf{x}_0, \mathbf{x}_T)  \mathbf{x}_0}{p_t(\mathbf{x}_t, \mathbf{x}_T)}  \mathrm{~d} \mathbf{x}_0,
\end{equation}

\begin{align}
\nabla _{\mathbf{x}_t} \log p_t(\mathbf{x}_t \vert \mathbf{x}_T) &= \frac{\nabla _{\mathbf{x}_t} p_t(\mathbf{x}_t , \mathbf{x}_T)}{p_t(\mathbf{x}_t, \mathbf{x}_T)} \\ &= \frac{ \int \nabla _{\mathbf{x}_t} p_t(\mathbf{x}_t \vert \mathbf{x}_T, \mathbf{x}_0) \pi_0 (\mathbf{x}_0, \mathbf{x}_T) d \mathbf{x}_0}{p_t(\mathbf{x}_t, \mathbf{x}_T)}  \\
&= - \int\frac{\mathbf{x}_t - \alpha_t\mathbf{x}_0 - \beta_t \mathbf{x}_T}{\gamma^2}  \frac{  p_t(\mathbf{x}_t \vert \mathbf{x}_0, \mathbf{x}_T) \pi_0 (\mathbf{x}_0, \mathbf{x}_T) }{p_t(\mathbf{x}_t, \mathbf{x}_T)}d \mathbf{x}_0 \\
 &=  \frac{\alpha_t D^*(\mathbf{x}_t, \mathbf{x}_T) + \beta_t \mathbf{x}_T- \mathbf{x}_t}{\gamma^2}.
\end{align}

Thus we conclude the proof.

\end{proof}

    


Proof of \cref{theorem: stochasticity_control}. 
\begin{proof}
Recall Eqs. (\ref{reverse_SDE_for_bridge}) (\ref{reverse_ODE_for_bridge}) and \cref{lemma: stochasticity_control}, 
    \begin{equation}
    \label{equation:reverse_SDE}
        d \mathbf{X}_t = \left[ \frac{\dot{\alpha}_t}{\alpha_t} \mathbf{x}_t + (\dot{\beta}_t - \frac{\dot{\alpha}_t}{\alpha_t} \beta_t) \mathbf{x}_T - (\gamma_t \dot{\gamma}_t - \frac{\dot{\alpha}_t}{\alpha_t} \gamma_t^2 + \epsilon_t) \nabla _{\mathbf{x}_t} \log p_t(\mathbf{x}_t \vert \mathbf{x}_T) \right] dt + \sqrt{2\epsilon_t} d \mathbf{w}_t.
    \end{equation}
\end{proof}

Next we take the reparameterized score in \cref{eq:score_repara} into \cref{equation:reverse_SDE}:

\begin{align}
    d \mathbf{X}_t &= \left[ \frac{\dot{\alpha}_t}{\alpha_t} \mathbf{X}_t + (\dot{\beta}_t - \frac{\dot{\alpha}_t}{\alpha_t} \beta_t) \mathbf{x}_T - (\gamma_t \dot{\gamma}_t - \frac{\dot{\alpha}_t}{\alpha_t} \gamma_t^2 + \epsilon_t)\frac{\alpha_t \hat{\mathbf{x}}_0 + \beta_t \mathbf{x}_T- \mathbf{X}_t}{\gamma_t^2} \right] dt + \sqrt{2\epsilon_t} d \mathbf{w}_t \\
    &= \left[ \dot{\alpha}_t \hat{\mathbf{x}}_0 + \dot{\beta}_t \mathbf{x}_T - (\gamma_t \dot{\gamma}_t + \epsilon_t)\frac{\alpha_t \hat{\mathbf{x}}_0 + \beta_t \mathbf{x}_T- \mathbf{X}_t}{\gamma_t^2} \right] dt + \sqrt{2\epsilon_t} d \mathbf{w}_t \\
    &= \left[ \dot{\alpha}_t \hat{\mathbf{x}}_0 + \dot{\beta}_t \mathbf{x}_T - (\dot{\gamma}_t + \frac{\epsilon_t}{\gamma_t})\frac{\alpha_t \hat{\mathbf{x}}_0 + \beta_t \mathbf{x}_T- \mathbf{X}_t}{\gamma_t} \right] dt + \sqrt{2\epsilon_t} d \mathbf{w}_t \\
    &= \left[ \dot{\alpha}_t \hat{\mathbf{x}}_0 + \dot{\beta}_t \mathbf{x}_T - (\dot{\gamma}_t + \frac{\epsilon_t}{\gamma_t}) \hat{\mathbf{z}} \right] dt + \sqrt{2\epsilon_t} d \mathbf{w}_t.
\end{align}

\section{Reframing previous methods in our framework}
\label{sec:reframing}

\begin{table*}[t]
 \caption{Specify design choices for different model families. In the implementation, $\sigma_t = t$ for EDM, $\sigma_t = t, a_t = 1$ for DDBM-VE, $\sigma_t = \sqrt{e^{\frac{1}{2} \beta_d t^2 + \beta_{\min} t} - 1}$ and $a_t = 1 / \sqrt{e^{\frac{1}{2} \beta_d t^2 + \beta_{\min} t}}$ for DDBM-VP, where $\beta_d$ and $\beta_{\min}$ are parameters. We include details and proofs in Appendix \ref{sec:reframing}.}
    \label{tab:design_space}
    
\setlength{\tabcolsep}{0.5pt} 
\renewcommand{\arraystretch}{1.5}
\small
    \centering
    \begin{tabular}{ccccccc}
    \midrule
     &  & I2SB  & DDBM & DBIM & EDM  &Ours\\
  
     \midrule
    \multirow{3}{*}{\makecell{Transition kernel}} &  $\alpha_t$ & $1 - \sigma_t^2 / \sigma_T^2$ & {$a_t(1{-} {a_T^2 \sigma_t^2}/(\sigma_t^2 a_t^2) )$} & {$a_t(1{-} {a_T^2 \sigma_t^2}/(\sigma_t^2 a_t^2) )$} & $1$ & $1 - t$\\
    & $\beta_t$ &$\sigma_t^2 / \sigma_T^2$ & {${a_T \sigma_t^2}/(\sigma_t^2 a_t)$} & {${a_T \sigma_t^2}/(\sigma_t^2 a_t)$} & $0$ & $t$ 
    \\
    & $\gamma_t^2$ & $ \sigma_t^2 (1 - \sigma_t^2 / \sigma_T^2)$& {$\sigma_t^2(1{-} {a_T^2 \sigma_t^2}/(\sigma_t^2 a_t^2)  )$} & {$\sigma_t^2(1{-} {a_T^2 \sigma_t^2}/(\sigma_t^2 a_t^2)  )$} & $\sigma_t^2$ & $\frac{\gamma_{\max}^2}{4}{t(1 - t)}$
    \\
    \midrule
      \multirow{2}{*}{\makecell{Sampling SDEs}} & \multirow{2}{*}{$\epsilon_t$} & \multirow{2}{*}{$\frac{\gamma_{t-\Delta t}^2 \beta_t^2 - \beta_{t-\Delta t}^2 \gamma_t^2}{2  \beta_t^2 \Delta t}$}  & $\eta (\gamma_t \dot{\gamma}_t - \frac{\dot{\alpha}_t}{\alpha_t} \gamma_t^2)$ & \multirow{2}{*}{$\begin{cases} 
\frac{\gamma_{t-\Delta t}^2}{2 \Delta t}, & t = 0 \\
0, & t \neq 0
\end{cases}$} &$\bar{\beta_t} \sigma_t^2$ &  $\eta (\gamma_t \dot{\gamma}_t - \frac{\dot{\alpha}_t}{\alpha_t} \gamma_t^2)$ \\
       & & & $\eta = 0$ or $\eta = 1$&  & - & $\eta \in [0, 1]$\\
      
    \midrule
    \multirow{2}{*}{\makecell{Base distribution}}
     &\multirow{2}{*}{$\pi_T$}& \multirow{2}{*}{$\pi_{\mathrm{cond}}$} & \multirow{2}{*}{$\pi_{\mathrm{cond}}$} &\multirow{2}{*}{$\pi_{\mathrm{cond}}$}&\multirow{2}{*}{$\pi_{\mathrm{cond}}$} & \multirow{2}{*}{$\pi_{\mathrm{cond}} *  \mathcal{N} (0, b^2 \mathbb{I})$}\\
     \\
    \midrule 
   
    \multirow{2}{*}{\makecell{Discretization}} &\multirow{2}{*}{-} & Euler & Euler & Euler & Heun & Euler
    \\
      &   &  \multirow{1}{*}{Eq. (\ref{discretization_3})} & \multirow{1}{*}{Eq. (\ref{discretization_1})} & Eq. (\ref{discretization_3}) & - & \multirow{1}{*}{Eqs. (\ref{discretization_1}) and  (\ref{discretization_3})} \\
    \midrule
    \end{tabular}
   
\end{table*}

We draw a link between our framework and the diffusion bridge models used in DDBM. 
\subsection{DDBM-VE}
DDBM-VE can be reformulated in our framework as we set :
\begin{equation}
    \alpha_t = s_t (1 - \frac{\sigma_t^2}{\sigma_T^2}), \beta_t = \frac{s_t\sigma_t^2}{s_1 \sigma_T^2}, \gamma_t = \sigma_t s_t\sqrt{ (1 - \frac{\sigma_t^2}{\sigma_T^2})}
\end{equation}

\begin{proof}
    In the origin DDBM paper, the evolution of conditional probability $q(\mathbf{x}_t \vert \mathbf{x}_T)$ has a time reversed SDE of the form:

\begin{equation}
\label{equ:ddbm_sde}
d\mathbf{X}_t=\left[\bar{\mathbf{f}}_t(\mathbf{X}_t) - \bar{g}_t^2 \bar{\mathbf{h}}_t(\mathbf{X}_t) -\bar{g}_t^2\mathbf{s}_t(\mathbf{X}_t) \right] dt+\bar{g}_t d\hat{\mathbf{W}}_t ,
\end{equation}

and an associated probability flow ODE

\begin{equation}
\label{equ:ddbm_ode}
d\mathbf{X}_t=\left[\bar{\mathbf{f}}_t(\mathbf{X}_t) - \bar{g}_t^2 \bar{\mathbf{h}}_t(\mathbf{X}_t) -\frac12 \bar{g}_t^2\mathbf{s}_t(\mathbf{X}_t) \right] dt.
\end{equation}

Compare Eqs. (\ref{equ:ddbm_sde}) and \ref{equ:ddbm_ode} with Lemma \ref{lemma: sde_kernel}. We only need to prove:

\begin{equation}
    \bar{\mathbf{f}}_t(\mathbf{X}_t) - \bar{g}_t^2 \bar{\mathbf{h}}_t(\mathbf{X}_t) = f_t \mathbf{X}_t + s_t \mathbf{x}_T, \bar{g}_t = g_t.
\end{equation}

In the original paper,
\begin{equation}
    \bar{\mathbf{f}}_t(\mathbf{X}_t) = 0,
    \bar{g}_t^2 = \frac{d}{dt} \sigma_t^2, \bar{\mathbf{h}}_t(\mathbf{X}_t) = \frac{\mathbf{x}_T - \mathbf{x}_t}{\sigma_T^2 - \sigma_t^2}.
\end{equation}

Therefore,

\begin{equation}
    \bar{\mathbf{f}}_t(\mathbf{X}_t) - \bar{g}_t^2 \bar{\mathbf{h}}_t(\mathbf{X}_t) = \frac{2 \sigma_t \dot{\sigma}_t (\mathbf{x}_T - \mathbf{x}_t)}{\sigma_T^2 - \sigma_t^2}, \bar{g}_t^2 = 2 \dot{\sigma_t} \sigma_t.
\end{equation}

In our framework, $f_t, s_t, g_t^2$ can be calculated:
 
\begin{equation}
    f_t = \frac{\dot{\alpha}_t}{\alpha_t} = \frac{d}{dt} \log \alpha_t = \frac{d}{dt} \log \frac{\sigma_T^2 - \sigma_t^2}{\sigma_T^2} = \frac{-2 \sigma_t \dot{\sigma}_t}{\sigma_T^2 - \sigma_t^2},
\end{equation}

\begin{equation}
    s_t = \dot{\beta}_t - \frac{\dot{\alpha}_t}{\alpha_t} \beta_t = \frac{2 \sigma_t \dot{\sigma}_t}{\sigma_T^2} + \frac{2 \sigma_t \dot{\sigma}_t}{\sigma_T^2 - \sigma_t^2} \cdot \frac{\sigma_t^2}{\sigma_T^2} = \frac{2 \sigma_t \dot{\sigma_t}}{\sigma_T^2 - \sigma_t^2}.
\end{equation}

\begin{equation}
    g_t^2 = 2 (\gamma_t \dot{\gamma}_t - \frac{\dot{\alpha}_t}{\alpha_t} \gamma_t^2) =  2\gamma_t^2 \left( \frac{\dot{\gamma}_t}{\gamma_t} -  \frac{\dot{\alpha}_t}{\alpha_t} \right) =  \gamma_t^2 \left( \frac{(\sigma_T^2 - 2 \sigma_t^2) \dot{\sigma}_t}{(\sigma_T^2 - \sigma_t^2) \sigma_t} + \frac{2\dot{\sigma}_t \sigma_t}{\sigma_T^2 - \sigma_t^2} \right) = 2 \sigma_t \dot{\sigma}_t.
\end{equation}

Therefore, 

\begin{equation}
    f_t \mathbf{X}_t + s_t \mathbf{x}_T = \frac{2 \sigma_t \dot{\sigma}_t (\mathbf{x}_T - \mathbf{x}_t)}{\sigma_T^2 - \sigma_t^2} = \bar{\mathbf{f}}_t(\mathbf{X}_t) - \bar{g}_t^2 \bar{\mathbf{h}}_t(\mathbf{X}_t),\quad  \bar{g}_t = g_t,
\end{equation}

which matches the formulation in DDBM.

\end{proof}

\subsection{DDBM-VP}

DDBM-VP can be reformulated in our framework as we set :

\begin{equation}
    \alpha_t =a_t ( 1 - \frac{\sigma_t^2 a_1^2}{\sigma_1^2 a_t^2}), \beta_t = \frac{\sigma_t^2 a_1}{\sigma_1^2 a_t},  \gamma_t= \sqrt{\sigma_t^2 (1 - \frac{\sigma_t^2 a_1^2}{\sigma_1^2 a_t^2})}.
\end{equation}

\begin{proof}

In the original DDBM-VP setting,

\begin{equation}
\bar{\mathbf{f}}_t(\mathbf{X}_t) =  \frac{d \log a_t}{dt} \mathbf{x}_t,
\end{equation}

\begin{equation}
    \bar{g}_t^2 = 2 \sigma_t \dot{\sigma}_t - 2 \frac{\dot{a}_t}{a_t} \sigma_t^2 = \frac{2 \sigma_t \dot{\sigma}_t a_t - 2 \sigma_t^2 \dot{a}_t}{a_t},
\end{equation}

\begin{equation}
\bar{\mathbf{h}}_t(\mathbf{X}_t) = \frac{(a_t/a_1)\mathbf{x}_T-\mathbf{x}_t}{\sigma_t^2(\mathrm{SNR}_t/\mathrm{SNR}_1-1)} = \frac{a_1 a_t \mathbf{x}_T - a_1^2 \mathbf{x}_t}{\sigma_1^2 a_t^2 - \sigma_t^2 a_1^2}.
\end{equation}

Therefore,

\begin{equation}
    \bar{\mathbf{f}}_t(\mathbf{X}_t) - \bar{g}_t^2 \bar{\mathbf{h}}_t(\mathbf{X}_t) = \left[ \frac{\dot{a}_{t}}{a_{t}}-\frac{2\sigma_{t}a_{1}^{2}(\dot{\sigma}_{t}a_{t}-\sigma_{t}\dot{a}_{t})}{a_t(\sigma_{1}^{2}a_{t}^{2}-\sigma_{t}^{2}a_{1}^{2})} \right] \mathbf{x}_t + \frac{2 \sigma_t a_1 (\dot{\sigma}_t a_t - \sigma_t \dot{a}_t)}{\sigma_1^2 a_t^2 - \sigma_t^2 a_1^2} \mathbf{x}_T.
\end{equation}

In our framework, $f_t, s_t, g_t^2$ can be calculated:

\begin{align}
f_t = \frac{\dot{\alpha}_t}{\alpha_t} &= \frac{d}{dt} \log \alpha_t \\& = \frac{d}{dt} \log \frac{\sigma_1^2 a_t^2 - \sigma_t^2 a_1^2}{\sigma_1^2 a_t} \\&= \frac{2 \sigma_1^2 a_t \dot{a}_t - 2 a_1^2 \sigma_t \dot{\sigma}_t}{\sigma_1^2 a_t^2 - \sigma_t^2 a_1^2} - \frac{\dot{a}_t}{a_t} \\
&= \frac{\dot{a}_t}{a_t} - \frac{2 a_1^2 \sigma_t (a_t \dot{\sigma}_t - \dot{a}_t \sigma_t)}{a_t(\sigma_1^2 a_t^2 - \sigma_t^2 a_1^2)},
\end{align}

\begin{align}
s_t = \dot{\beta}_t - \frac{\dot{\alpha}_t}{\alpha_t} \beta_t &= \beta_t (\frac{\dot{\beta}_t}{\beta_t} - \frac{\dot{\alpha}_t}{\alpha_t}) \\&=  \frac{\sigma_t^2 a_1}{\sigma_1^2 a_t} \left( \frac{2 \dot{\sigma_t}}{\sigma_t} - \frac{2 \sigma_1^2 a_t \dot{a}_t - 2 a_1^2 \sigma_t \dot{\sigma}_t}{\sigma_1^2 a_t^2 - \sigma_t^2 a_1^2}\right) \\
&= \frac{2 \sigma_t a_1 (\dot{\sigma}_t a_t - \sigma_t \dot{a}_t)}{\sigma_1^2 a_t^2 - \sigma_t^2 a_1^2},
\end{align}

\begin{align}
g_t^2 = \gamma_t \dot{\gamma}_t - \frac{\dot{\alpha}_t}{\alpha_t} \gamma_t^2 &=  \gamma_t^2 \left( \frac{\dot{\gamma}_t}{\gamma_t} -  \frac{\dot{\alpha}_t}{\alpha_t} \right) \\ &= \gamma^2 \frac{d}{dt} \log \frac{\gamma_t}{\alpha_t} \\ &= \gamma^2 \frac{d}{dt}  (\frac{1}{2} \log \frac{\sigma_t^2 \sigma_1^2 }{\sigma_1^2 a_t^2 - \sigma_t^2 a_1^2}) \\ &= \sigma_t^2 \left(1 - \frac{\sigma_t^2 a_1^2}{\sigma_1^2 a_t^2} \right) \left( \frac{ \dot{\sigma}_t}{\sigma_t} -   \frac{\sigma_1^2 a_t \dot{a}_t -  a_1^2 \sigma_t \dot{\sigma}_t}{\sigma_1^2 a_t^2 - \sigma_t^2 a_1^2} \right) \\
&= \frac{\dot{\sigma}_t \sigma_t a_t - \sigma_t^2 \dot{a}_t}{a_t}.
\end{align}

Therefore, 

\begin{equation}
    f_t \mathbf{X}_t + s_t \mathbf{x}_T =  = \bar{\mathbf{f}}_t(\mathbf{X}_t) - \bar{g}_t^2 \bar{\mathbf{h}}_t(\mathbf{X}_t), \bar{g}_t = g_t,
\end{equation}

which matches the formulation in DDBM.

\end{proof}

\subsection{EDM}

\textbf{ODE formulation}. The ODE formulation in EDM can be formlated in our framework as we set $\alpha_t = 1, \beta_t = 0, \gamma_t = \sigma_t$.

\begin{proof}

Recall \ref{reverse_ODE_for_bridge}, the ODE formulation is given by: 
\begin{equation}
    d \mathbf{X}_t = \left[ f_t \mathbf{X}_t + s_t \mathbf{x}_T - \frac{1}{2}g_t^2 \nabla_{\mathbf{X}_t} \log p_t(\mathbf{X}_t \vert \mathbf{x}_T) \right] dt \quad \mathbf{X}_T = \mathbf{x}_T
\end{equation}

where $f_t = \frac{\dot{\alpha}_t}{\alpha_t}, \quad s_t = \dot{\beta}_t - \frac{\dot{\alpha}_t}{\alpha_t} \beta_t, \quad g_t = \sqrt{2 (\gamma_t \dot{\gamma}_t - \frac{\dot{\alpha}_t}{\alpha_t} \gamma_t^2)}$. As $\alpha_t = 1, \beta_t = 0, \gamma_t = \sigma_t$, The sampling ODE is given by:

\begin{equation}
     d \mathbf{X}_t = - \sigma_t \dot{\sigma}_t \nabla_{\mathbf{x}_t} \log p_t(\mathbf{X}_t) dt
\end{equation}
\end{proof}



\textbf{Sampling SDEs with noise added}. Recall Proposition \ref{theorem: stochasticity_control}, as $\alpha_t = 1, \beta_t = 0, \gamma_t = \sigma_t$, then the SDE has the form:

\begin{equation}
    d \mathbf{X}_t = (- \sigma_t \dot{\sigma}_t  + \epsilon_t) \nabla_{\mathbf{x}_t} \log p_t(\mathbf{X}_t)  dt + \sqrt{2 \epsilon_t} d \mathbf{W}_t.
\end{equation}

Now we recover the stochastic sampling SDE in original EDM paper.

\subsection{I2SB}

I2SB can be reformulated in our framework as we let:
\begin{equation}
    \alpha_t = 1 - \frac{\sigma_t^2}{\sigma_1^2}, \beta_t = \frac{\sigma_t^2}{\sigma_1^2}, \gamma_t= \sqrt{\sigma_t^2 (1 - \frac{\sigma_t^2}{\sigma_1^2})}
\end{equation}

where $\sigma_t^2 := \int_0^t \beta_{\tau} d \tau$. When $2\epsilon_t \Delta t = \gamma_{t-\Delta t}^2 - \beta_{t-\Delta t}^2 \gamma_t^2 / \beta_t^2$, the coefficient of ${x}_T$ in Eq. (\ref{discretization_3}) vanishes. Thus, Eq. (\ref{discretization_3}) can be simplified as:

\begin{align}
\label{discretization_4}
    {x}_{t - \Delta t} &= (\alpha_{t - \Delta t} - \alpha_t \frac{\beta_{t - \Delta t}}{\beta_t}) \hat{{x}}_0 + \nonumber\\ &\frac{\beta_{t - \Delta t}}{\beta_t} {x}_t + \sqrt{ \gamma_{t - \Delta t}^2 - \frac{\beta_{t - \Delta t}^2 \gamma_t^2}{\beta_t^2}} \bar{{z}}_t
\end{align}

Using discretization in \cref{discretization_4}:

\begin{align}\mathbf{x}_{t - \Delta t} &= (\alpha_{t - \Delta t} - \alpha_t \frac{\beta_{t - \Delta t}}{\beta_t}) \hat{\mathbf{x}}_0 + \frac{\beta_{t - \Delta t}}{\beta_t} \mathbf{x}_t + \sqrt{ \gamma_{t - \Delta t}^2 - \frac{\beta_{t - \Delta t}^2 \gamma_t^2}{\beta_t^2}} \bar{\mathbf{z}}_t \\
&= (1-\frac{\beta_{t-\Delta t}}{\beta_t}) \hat{\mathbf{x}}_0 +  \frac{\beta_{t - \Delta t}}{\beta_t} \mathbf{x}_t + \sqrt{ \gamma_{t - \Delta t}^2 - \frac{\beta_{t - \Delta t}^2 \gamma_t^2}{\beta_t^2}} \bar{\mathbf{z}}_t \\
&= (1 - \frac{\sigma_{t - \Delta t}^2}{\sigma_t^2}) \hat{\mathbf{x}}_0 + \frac{\sigma_{t - \Delta t}^2}{\sigma_t^2} \mathbf{x}_t + \sqrt{\frac{\sigma_{t-\Delta t}^2 (1 - \frac{\sigma_{t - \Delta t}^2}{\sigma_1^2}) \frac{\sigma_t^4}{\sigma_1^4} - \frac{\sigma_{t-\Delta t}^4}{\sigma_1^4} \sigma_{t}^2 (1 - \frac{\sigma_{t}^2}{\sigma_1^2}) }{\frac{\sigma_t^4}{\sigma_1^4}}} \bar{\mathbf{z}}_t \\
\label{equ:i2sb_1}
&= (1 - \frac{\sigma_{t - \Delta t}^2}{\sigma_t^2}) \hat{\mathbf{x}}_0 + \frac{\sigma_{t - \Delta t}^2}{\sigma_t^2} \mathbf{x}_t + \sqrt{\frac{\sigma_{t - \Delta t}^2 (\sigma_t^2 - \sigma_{t - \Delta t}^2)}{\sigma_t^2}} \bar{\mathbf{z}}_t
\end{align}

In the I2SB paper, define $a_n^2 := \int_{t_n}^{t_{n+1}}\beta_\tau\mathrm{d}\tau$, $\sigma_n^2 := \int_{0}^{t_{n}}\beta_\tau\mathrm{d}\tau$. Therefore,

\begin{align}
    \mathbf{x}_{n} &= \frac{a_n^2}{a_n^2 + \sigma_n^2} \hat{\mathbf{x}}_0 + \frac{\sigma_n^2}{a_n^2 + \sigma_n^2} \mathbf{x}_{n+1} +\sqrt{\frac{\sigma_n^2 a_n^2}{\alpha_n^2 + \sigma_n^2}} \bar{\mathbf{z}}_t
\end{align}

Thus, we reproduce the sampler of I2SB.
\section{Additional design guideline}
\label{sec:design_guideline}


\begin{figure}[t]
    \centering
    \includegraphics[width=0.95\linewidth]{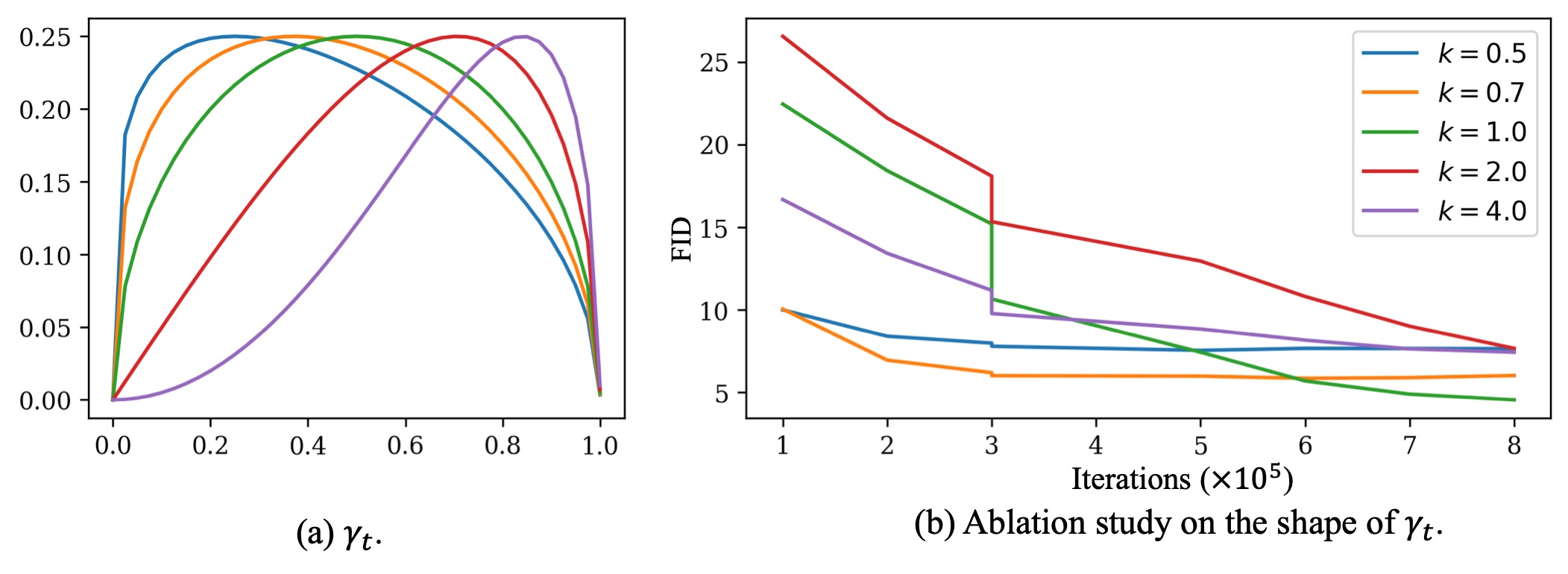}
    \caption{Ablation study on the shape of $\gamma_t$.}
    \label{fig:ablation_shape_gamma}
\end{figure}

{
\color{black}

\textbf{$\alpha_t$ and $\beta_t$.} Theoretically, $\alpha_t$ and $\beta_t$ can be freely designed, and future work may explore alternative design choices. However, in this paper, we focus on the simple case where $\alpha_t = 1 - t$ and $\beta_t = t$. The rationale is as follows: consider the scenario where $\alpha_t = 1 - \beta_t$, which represents an interpolation along the line segment between $x_0$ and $x_1$. For the path 
$p_t^{(1)}(x) = \mathcal{N}((1 - \beta_t)x_0 + \beta_t x_1, \gamma_t^2 \mathbb{I})$, where $\beta_t$ is invertible, it is straightforward to construct another path 
$p_t^{(2)}(x) = \mathcal{N}((1 - t)x_0 + t x_1, \gamma_{\beta_t^{-1}}^2 \mathbb{I})$, which achieves the same objective function but uses a different distribution of $t$ during training. Based on this equivalence, setting $\alpha_t = 1 - t$ and $\beta_t = t$ is a reasonable choice.
\begin{figure}
    \centering
\includegraphics[width=1.0\linewidth]{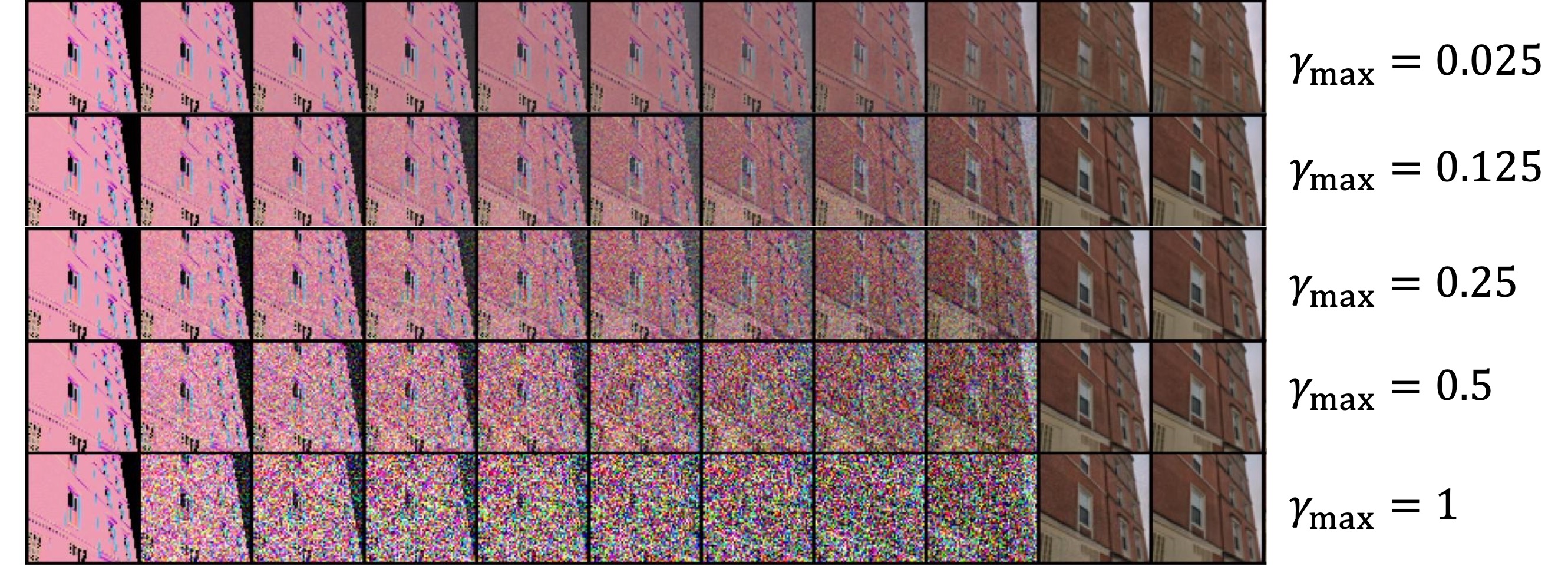}
    \caption{\textcolor{black}{Sampling paths with dfferent choices of $\gamma_t$. As $\gamma_t$ extreamly low, e.g, $\gamma_{\max} = 0.025$, the model will be failed to construct details of images.}}
    \label{fig:path_gamma}
\end{figure}

\textbf{The shape of $\gamma_t$}. We conducted an ablation study on $\gamma_t$ with different shapes. Specifically, we assumed $\gamma_t$ has the form $\gamma_t = 2 \gamma_{\max} \sqrt{t^k (1 - t^k)}$, as shown in Fig.~\ref{fig:ablation_shape_gamma}, $\gamma_t$ will have different shape as we set different $k$. The results indicate that the best performance is achieved when $k = 1$, which is the exact setting used in this paper.

$\gamma_{\max}$. Our ablation studies on \(\gamma_{\max}\) demonstrate that the optimal values of \(\gamma_{\max}\) are approximately \(0.125\) or \(0.25\). Furthermore, the sampling paths corresponding to different choices of \(\gamma_t\) are shown in Fig.~\ref{fig:path_gamma}. Adding an appropriate amount of noise to the transition kernel helps in constructing finer details.

\begin{figure}
    \centering
    \includegraphics[width=0.95\linewidth]{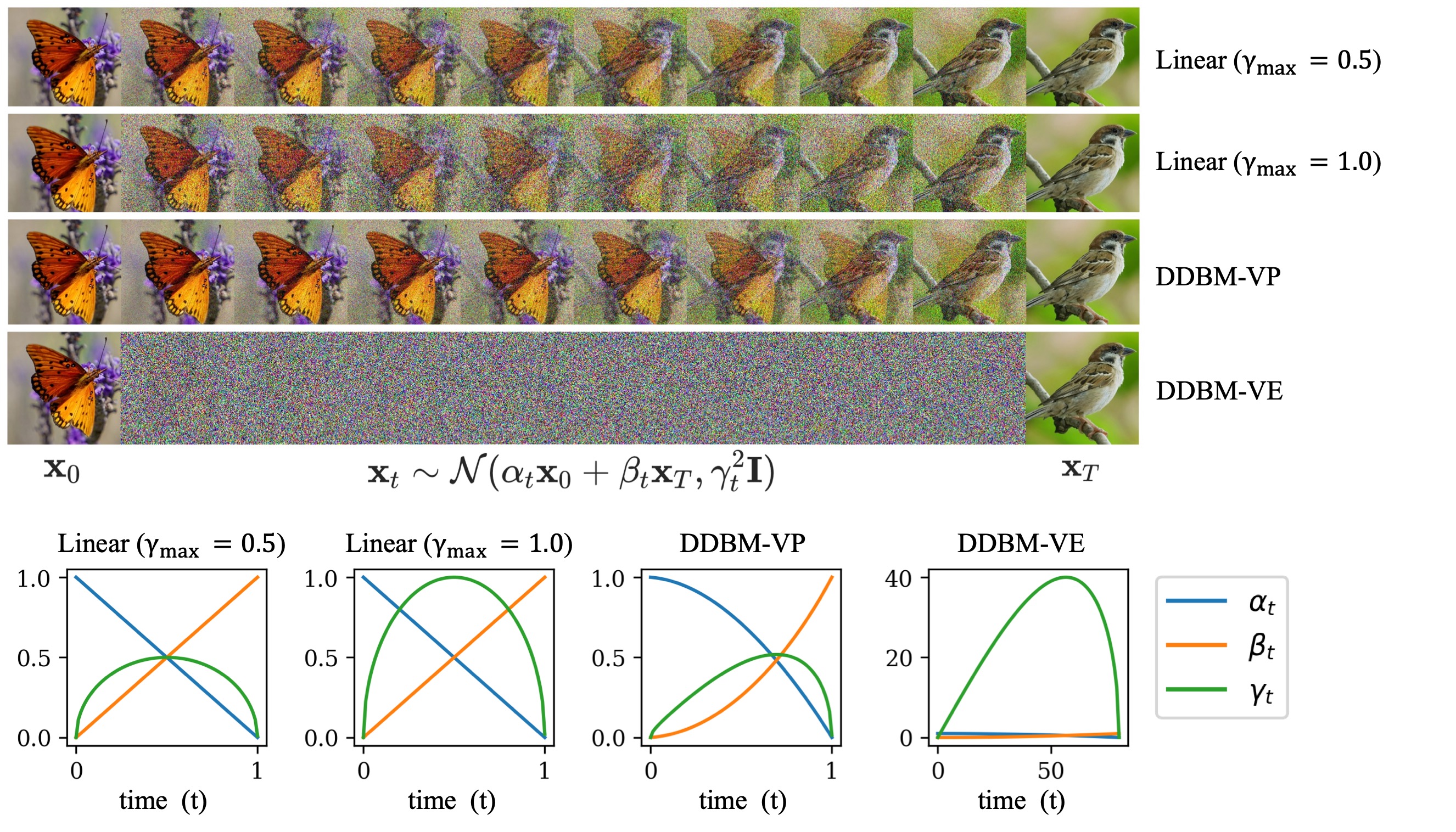}
    \caption{\textcolor{black}{An illustration of design choices of transition kernels and how they affect the I2I translation process. $\alpha_t$ and $\beta_t$ define the interpolation between two images, while $\gamma_t$ controls the noise added to the process. ntuitively, the DDBM-VE model introduces excessive noise in the middle stages, which is unnecessary for effective image translation and may explain its poor performance. In contrast, our Linear path results in a symmetrical noise schedule, ensuring a more balanced process. On the other hand, the DDBM-VP path adds more noise near  $\mathbf{x}_T$, , indicating that during training, more computational resources are focused around $\mathbf{x}_0$.}}
\end{figure}
\begin{figure}
    \centering
    \includegraphics[width=.9\linewidth]{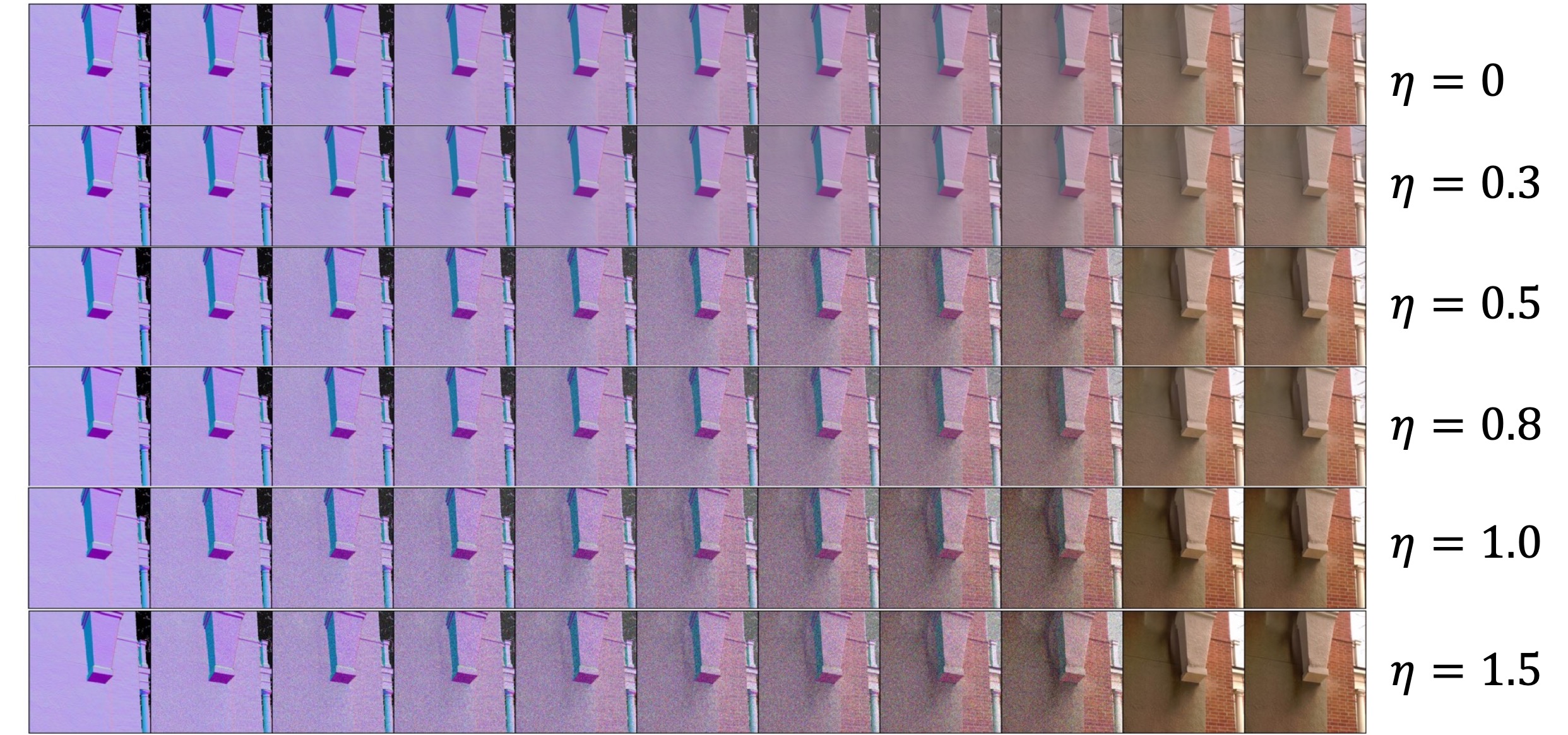}
    \caption{\textcolor{black}{Sampling path with dfferent choices of $\epsilon_t$. As $\epsilon_t = 0$, the generated images lack details, as $\epsilon_t$ too large, the sampled images are over-sharpening. The best choices of $\epsilon_t$ are around $\epsilon_t = 0.8$ and $\epsilon_t = 1.0$.}}
    \label{fig:path_eta}
\end{figure}

$\epsilon_t$. We use the setting
\(
\epsilon_t = \eta \left( \gamma_t \dot{\gamma}_t - \frac{\dot{\alpha}_t}{\alpha_t}\gamma_t^2 \right).
\)
The ablation studies on \(\epsilon_t\) demonstrate that the optimal choice of \(\eta\) for the DDBM-VP model is approximately \(0.3\), while the best choice for the ECSI model with a Linear Path is around \(1.0\). Additionally, we present sample paths and generated images under different \(\eta\) settings to illustrate heuristic parameter tuning techniques. The results are shown in Figures~\ref{fig:path_eta}, \ref{fig:ablation_eta_1}, and \ref{fig:ablation_eta_2}. Too small a value of \(\eta\) results in the loss of high-frequency information, while too large a value of \(\eta\) produces over-sharpened and potentially noisy sampled images.

}

\begin{figure}[ht]
    \centering
\includegraphics[width=0.8\textwidth]{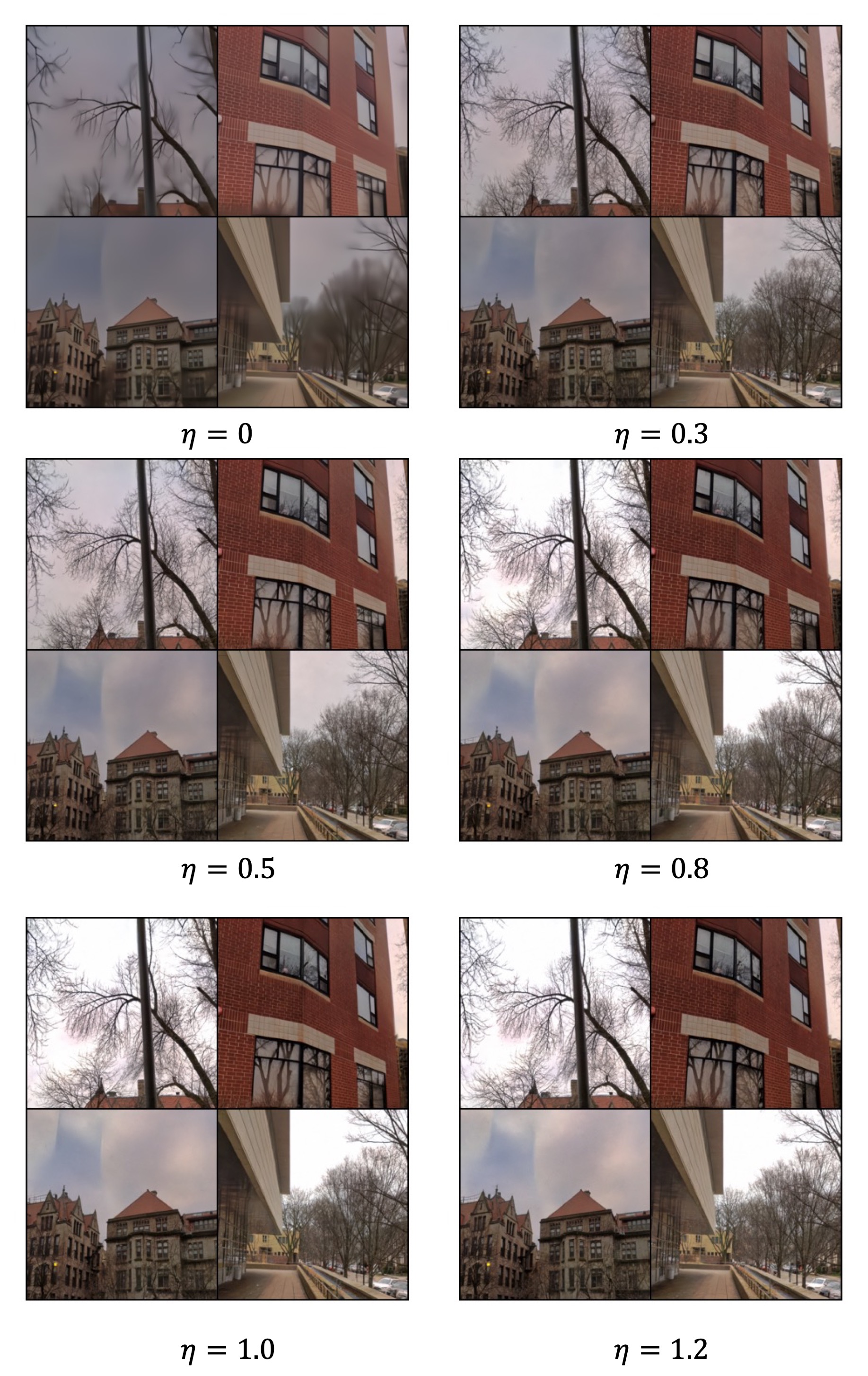}
    \caption{\textcolor{black}{Comparison of sampled images with different $\epsilon_t$ for ECSI model, where $\epsilon_t =  \eta (\gamma_t \dot{\gamma}_t - \frac{\dot{\alpha}_t}{\alpha_t} \gamma_t^2)$, $\gamma_{\max} = 0.25$, $b=0$.}}
    \label{fig:ablation_eta_1}
\end{figure}

\begin{figure}[ht]
    \centering
\includegraphics[width=0.8\textwidth]{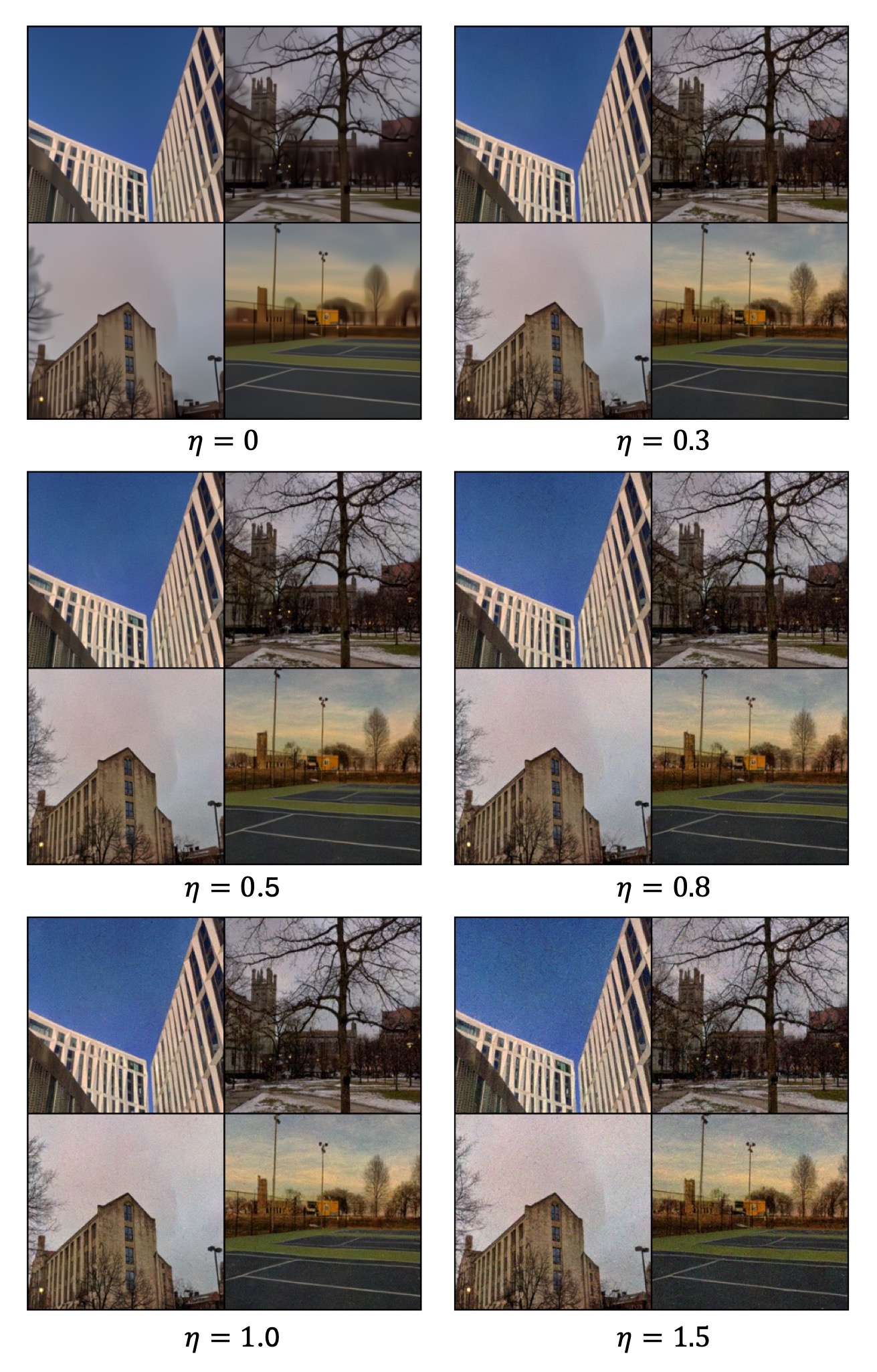}
    \caption{\textcolor{black}{Comparison of sampled images with different $\epsilon_t$ for DDBM-VP pretrained model, where $\epsilon_t =  \eta (\gamma_t \dot{\gamma}_t - \frac{\dot{\alpha}_t}{\alpha_t} \gamma_t^2)$.}}
    \label{fig:ablation_eta_2}
\end{figure}

\section{Impact Statement}
\label{sec:impact_statement}

Our method can improve image translation and solving inverse problem, which may benefit applications in medical imaging. However, it is important to note that as with many generative and restoration models, our method could be misused for malicious image manipulation.

\section{Experiment Details}
\label{sec:exp_details}

\textbf{Architecture}. We maintain the architecture and parameter settings consistent with \citep{linqizhouDenoisingDiffusionBridge2023}, utilizing the ADM model \citep{dhariwal2021diffusion} for $64 \times 64$ resolution, modifying the channel dimensions from $192$ to $256$ and reducing the number of residual blocks from three to two. Apart from these changes, all other settings remain identical to those used for $64 \times 64$ resolution.

\textbf{Training}. We include additional pre- and post-processing steps: scaling functions and loss weighting, the same ingredient as \citep{karras2022elucidating}. Let $D_{\theta}(\mathbf{x}_t, \mathbf{x}_T, t) = c_{\mathrm{skip}}(t) \mathbf{x}_t + c_{\mathrm{out}(t)} (t) F_{\theta} (c_{\mathrm{in}} (t) \mathbf{x}_t, c_{\mathrm{noise}}(t)) $, where $F_{\theta}$ is a neural network with parameter $\theta$, the effective training target with respect to the raw network $F_{\theta}$ is: $\mathbb{E}_{\mathbf{x}_t,\mathbf{x}_0,\mathbf{x}_T,t}\left[\lambda \|c_\mathrm{skip}(\mathbf{x}_t+c_\mathrm{out} {F_\theta}(c_\mathrm{in}\mathbf{x}_t,c_\mathrm{noise})-\mathbf{x}_0\|^2\right]$.  Scaling scheme are chosen by requiring network inputs and training targets to have unit variance $(c_{\mathrm{in}}, c_{\mathrm{out}})$, and amplifying errors in $F_{\theta}$ as little as possible. Following reasoning in \citep{linqizhouDenoisingDiffusionBridge2023},

\begin{align}
    c_{\mathrm{in}}(t) = \frac{1}{\sqrt{\alpha_t^2 \sigma_0^2 + \beta_t^2 \sigma_T^2 + 2 \alpha_t \beta_t \sigma_{0T} + \gamma_t^2}}, \quad c_\mathrm{skip}(t)  = (\alpha_{t}\sigma_{0}^{2}+\beta_{t}\sigma_{0T}) * c_\mathrm{in}^2, \\ c_{\mathrm{out}}(t) = \sqrt{\beta_{t}^{2}\sigma_{0}^{2}\sigma_{1}^{2}-\beta_{t}^{2}\sigma_{0T}^{2}+\gamma_{t}^{2}\sigma_{0}^{2}} c_\mathrm{in}, \quad \lambda = \frac{1}{c_\mathrm{out}^2}, \quad c_{\mathrm{noise}}(t)=\frac14\log{(t)},
\end{align}

where $\sigma_0^2, \sigma_T^2$, and $\sigma_{0T}$ denote the variance of $\mathbf{x}_0$, variance of $\mathbf{x}_T$ and the covariance of the two, respectively.

{
\color{black}

We note that TrigFlow \citep{lu2024simplifying}, adopts the same score reparameterization and pre-conditioning techniques. It can be considered a special case of our framework by setting $\alpha_t = \cos(t)$, $\beta_t=0$, $\gamma_t=\sigma_0 \sin(t)$, $t\in[0,\frac{\pi}{2}]$. In this case, $\sigma_T=0$, $\sigma_{0T}=0$,

\begin{align}
    c_{\mathrm{in}}(t) &= \frac{1}{\sqrt{\alpha_t^2\sigma_0^2+\gamma_t^2}} =\frac{1}{\sqrt{\sin^2(t) \sigma_0^2 + \cos^2(t)\sigma_0^2}}=\frac{1}{\sigma_0}, \\
    c_{\mathrm{skip}}(t) &= (\alpha_t \sigma_0^2)c_{in}^2 = \cos (t) \cdot \sigma_0^2 \cdot \frac{1}{\sigma_0^2} = \cos (t), \\
    c_{out} (t) &= \sqrt{\gamma_t^2 \sigma_0^2} \cdot c_{in}  = \sin (t) \sigma_0, \\
    D_{\theta}(x_t, t)&=c_{\mathrm{skip}}x_t+c_{\mathrm{out}}F_{\theta}(c_{\mathrm{in}}x_t, c_{\mathrm{noise}})=\cos (t) x_t + \sin (t) \sigma_0 F_{\theta}(\frac{1}{\sigma_0}, c_{\mathrm{noise}}).
\end{align}

Then we recover TrigFlow.

}

In our implementation, we set $\sigma_0 = 
\sigma_T = 0.5, \sigma_{0T} = \sigma_0^2 / 2$ for all training sessions. Other setting are shown in Table \ref{tab:training_setting}.

\begin{table}[ht]
\caption{Training settings}
    \label{tab:training_setting}
    \centering
    \begin{tabular}{ccccc}
    \midrule
    \multirow{3}{*}{Model} & Dataset & edges$\to$handbags & edges$\to$handbags & edges$\to$handbags \\
     & $\eta$ & $0$&$0$ & $0.5$ \\
     & $\gamma_{\max}$ & $0.125$& $0.25$ & $0.125$ \\
     \midrule
     \multirow{5}{*}{Setting} & GPU & 1 A6000 48G & 1 H100 96G & 1 H100 96G \\
      & Batch size
         & 32 & 128 & 200 \\
    & Learning rate & $1 \times 10^{-5}$& $5 \times 10^{-5}$  & $1 \times 10^{-4}$ \\
    & epochs & $2078$ & $2106$ & $1443$\\
    &Training time & 42 days & 8 days & 11 days \\
    \midrule
    \midrule
    \multirow{3}{*}{Model} & Dataset & DIODE ($256 \times 256$) & DOIDE ($256 \times 256$) & \\
     & $\eta$ & $0$&$0$  \\
     & $\gamma_{\max}$ & $0.125$& $0.25$  \\
     \midrule
     \multirow{5}{*}{Setting} & GPU & 1 H100 96G & 1 H100 96G &  \\
      & Batch size
         & 16 & 16 &  \\
    & Learning rate & $2 \times 10^{-5}$& $2 \times 10^{-5}$  & \\
    & epochs & $2617$ & $1745$ & \\
    &Training time & 17 days & 25 days &\\
    \midrule
    \end{tabular}
    
\end{table}

\textbf{Sampling}. We use the same timesteps distributed according to EDM \citep{karras2022elucidating}: $(t_{\max}^{1/\rho}+\frac iN(t_{\min}^{1/\rho}-t_{\max}^{1/\rho}))^\rho $, where $t_{\min} = 0.001$ and $t_{\max} = 1-10^{-4}$. The best performance achieved by setting $\rho=0.6$ for Edges2handbags and $\rho=0.8$ for DIODE datasets.

\section{Licenses}
\label{sec:licenses}
\begin{itemize}
    \item Edges$\rightarrow$Handbags \cite{isola2017image}: BSD license.
    \item DIODE-Outdoor \cite{vasiljevic2019diode}: MIT license.
\end{itemize}

\begin{figure}[ht]
    \centering
\includegraphics[width=.8\textwidth]{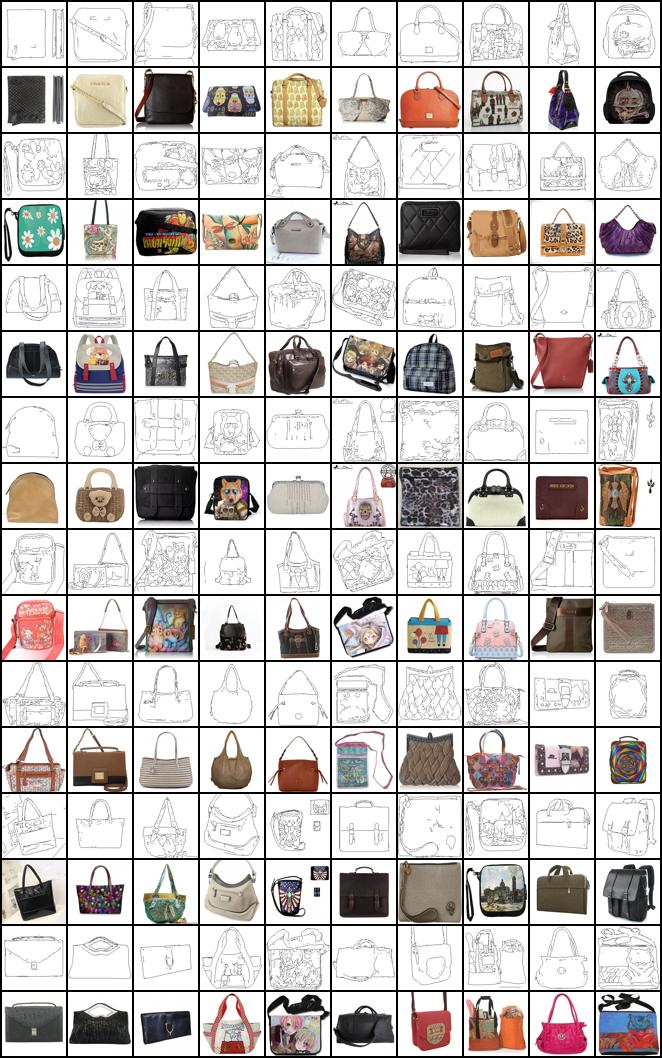}
    \caption{ECSI model and sampler ( $\gamma_{\max}=0.125$, $\eta=1$, $b=0$, NFE=$5$, FID=$0.89$).}
\end{figure}

\clearpage

\section{Additional visualizations}
\label{sec:add_visualization}

\begin{figure}[ht]
    \centering
    \includegraphics[width=.8\textwidth]{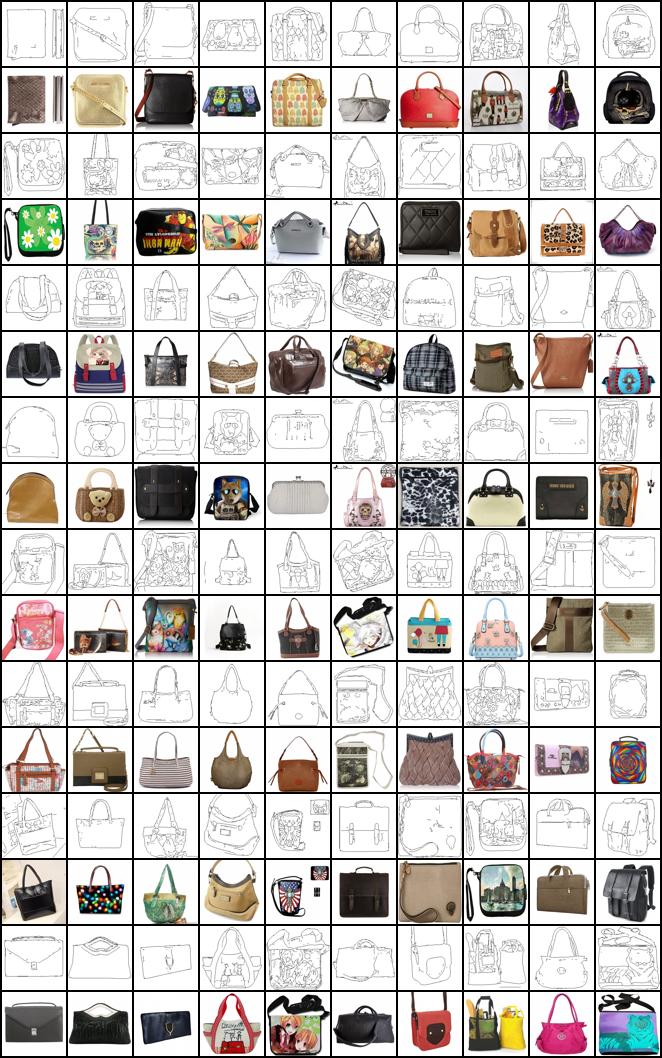}
    \caption{DDBM model and Our sampler (NFE=20, FID=1.53).}
\end{figure}

\begin{figure}[ht]
    \centering
\includegraphics[width=.8\textwidth]{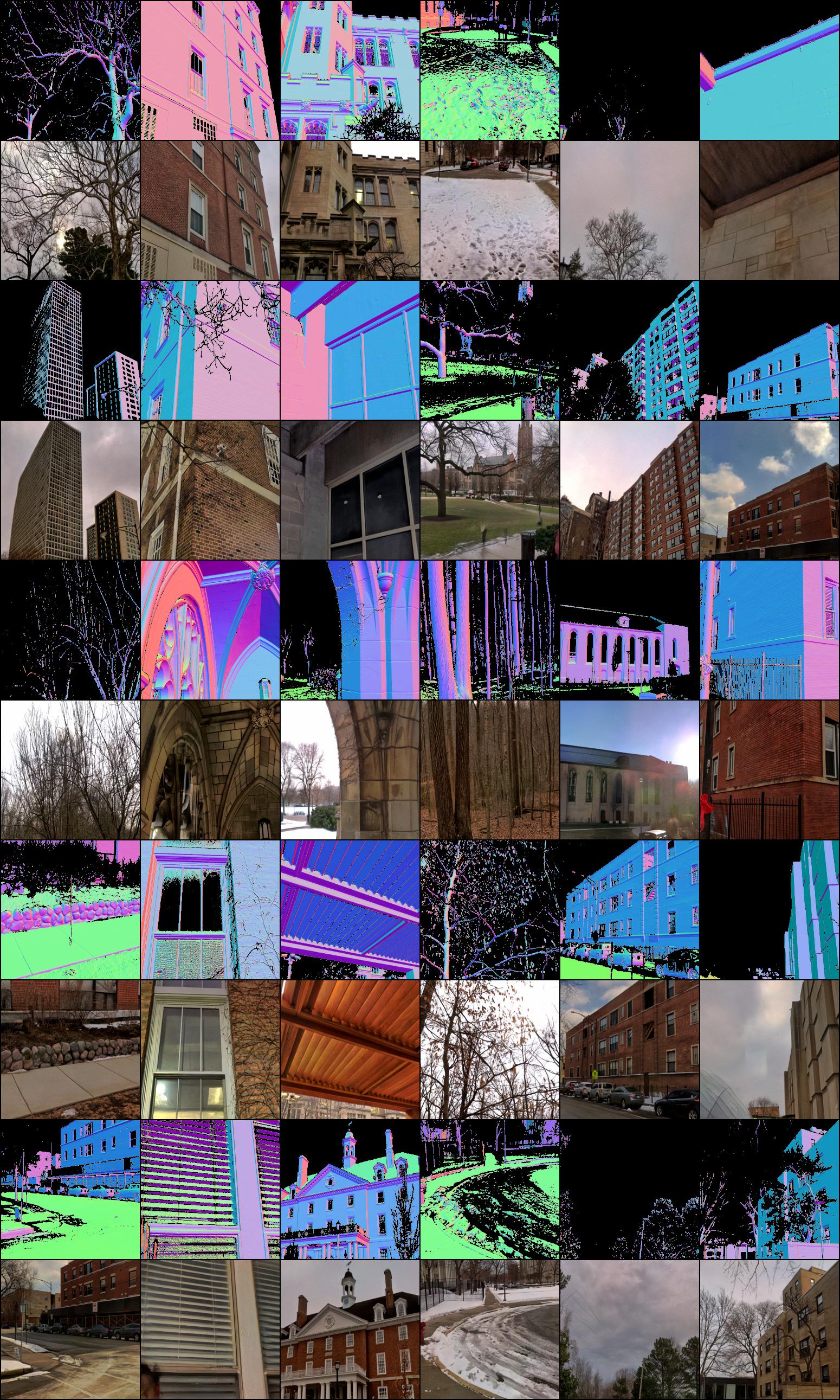}
    \caption{\textcolor{black}{DDBM model and ECSI sampler ($\eta=0.3$, NFE=$20$, FID=$4.12$). Samples for DIODE dataset (conditoned on depth images).}}
\end{figure}

\begin{figure}[ht]
    \centering
\includegraphics[width=.8\textwidth]{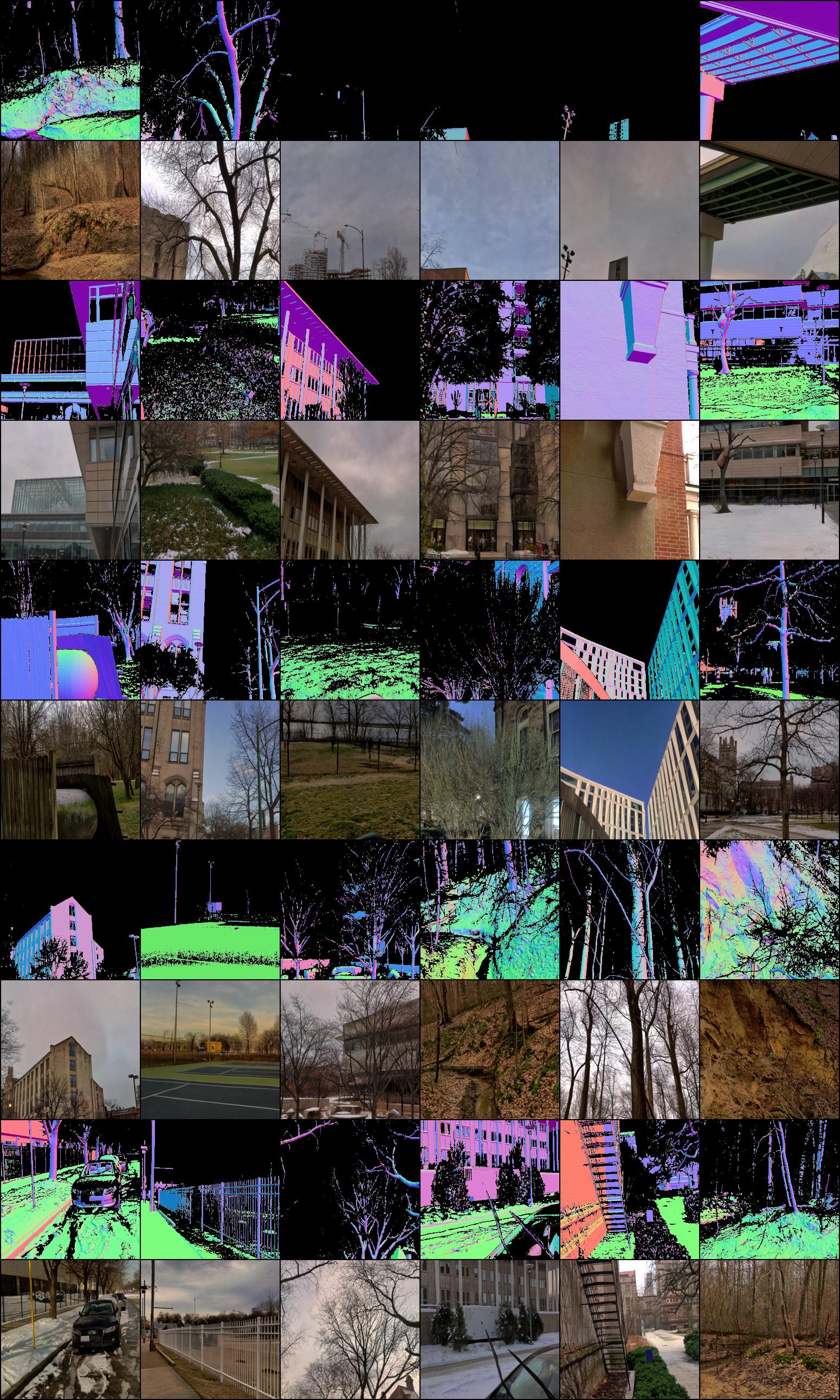}
    \caption{\textcolor{black}{ECSI model and sampler ($\gamma_{\max} = 0.25$, $\eta = 1.0$, $b=0$, NFE=5, FID = 4.16).}}
\end{figure}


\begin{figure}[ht]
    \centering
\includegraphics[width=0.75\textwidth]{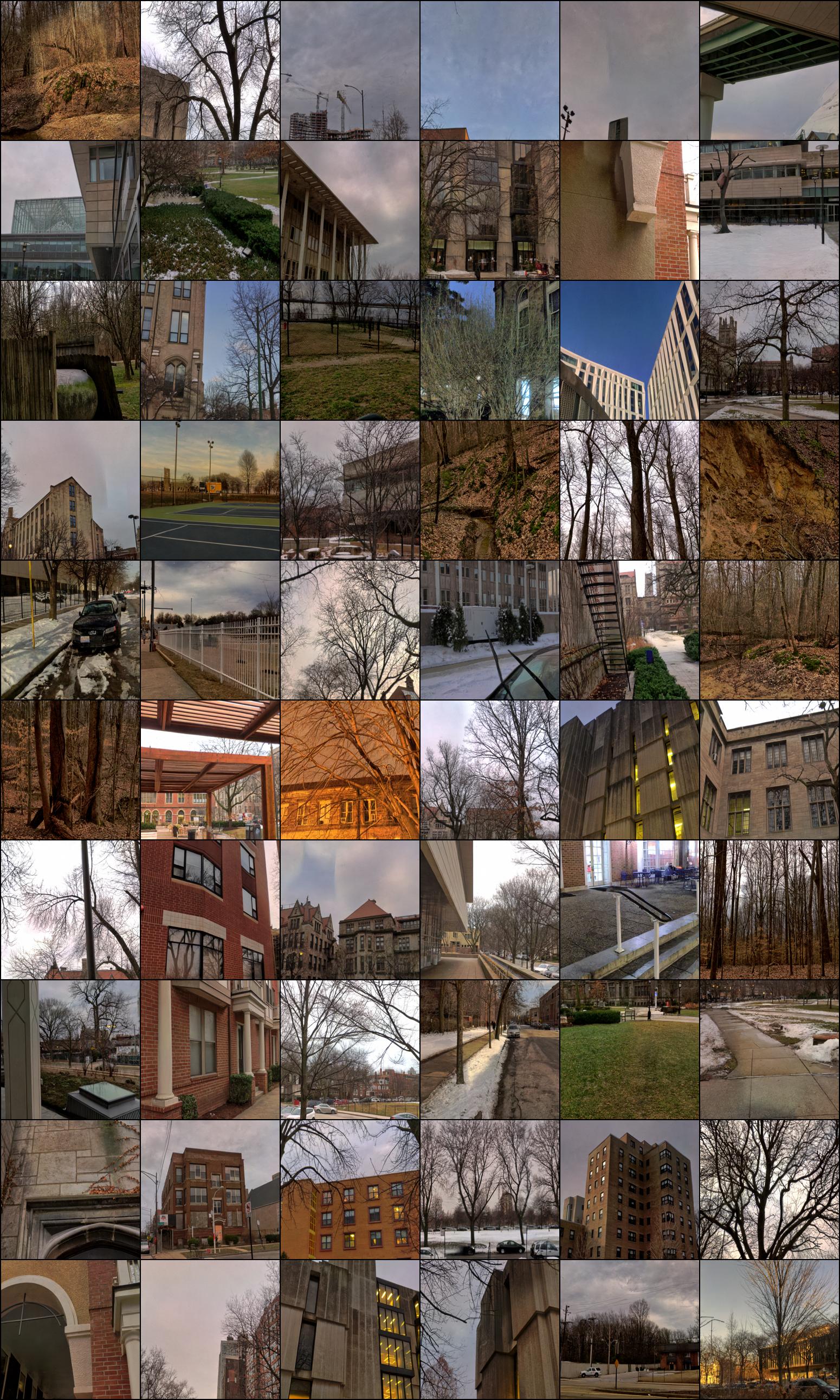}
    \caption{\textcolor{black}{ECSI model and sampler ($\gamma_{\max} = 0.25$, $\eta = 1.0$, $b=0$, NFE=20, FID = 3.27).}}
    
\end{figure}

\begin{figure}[ht]
    \centering
\includegraphics[width=.8\textwidth]{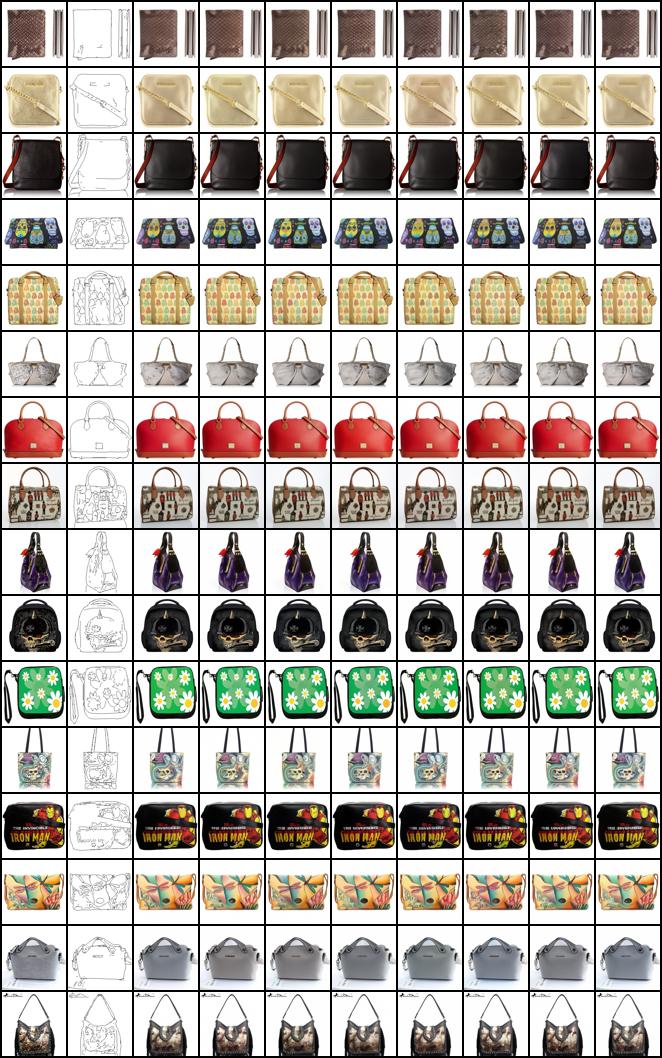}
    \caption{\textcolor{black}{DDBM model and DBIM sampler (NFE=10, FID = 2.46, AFD=5.20).}}
\end{figure}

\begin{figure}[ht]
    \centering
\includegraphics[width=.8\textwidth]{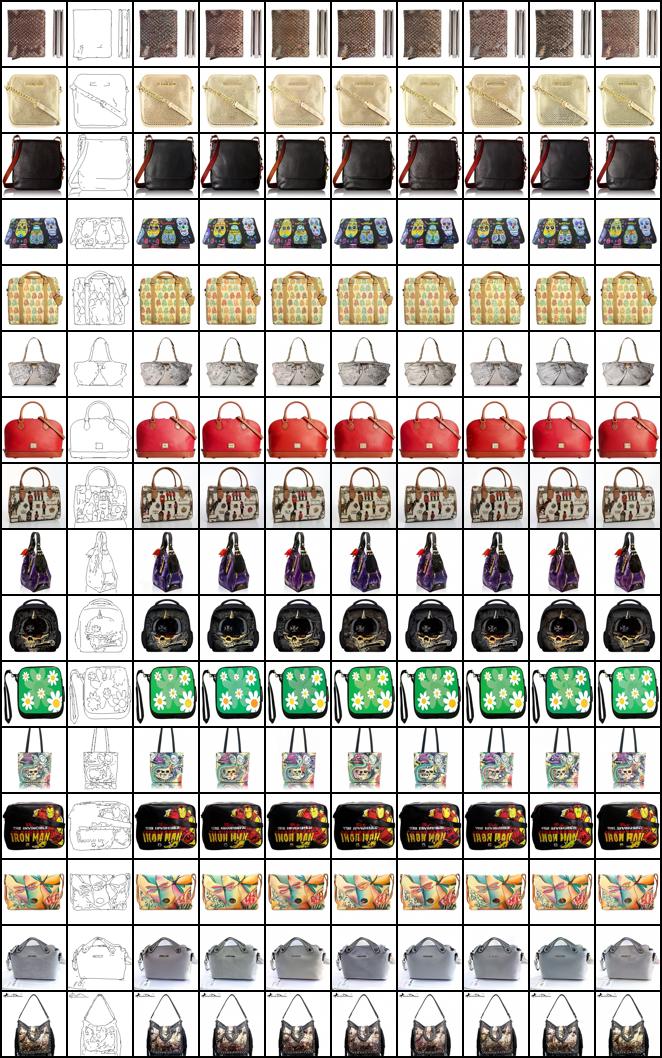}
    \caption{DDBM model and sampler (NFE=118, FID = 1.83, AFD=6.99).}
\end{figure}

\begin{figure}[ht]
    \centering
\includegraphics[width=.8\textwidth]{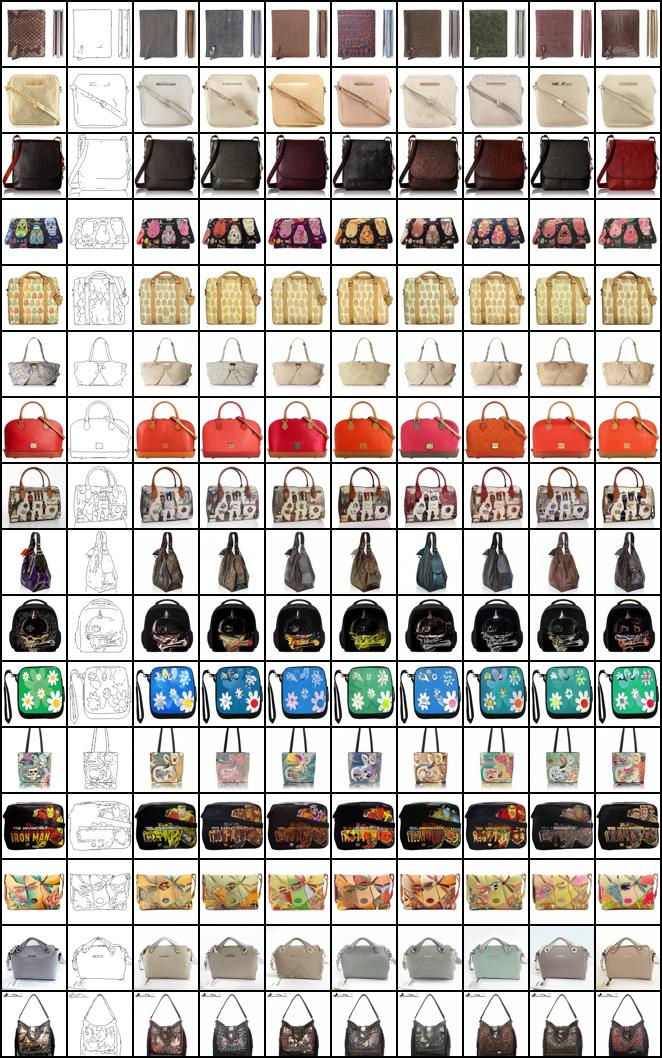}
    \caption{ECSI model and sampler ($\gamma_{\max} = 0.125$, $b = 1.0$, NFE=10, FID = 2.07, AFD=9.35).}
\end{figure}



\begin{figure}[ht]
    \centering
\includegraphics[width=.8\textwidth]{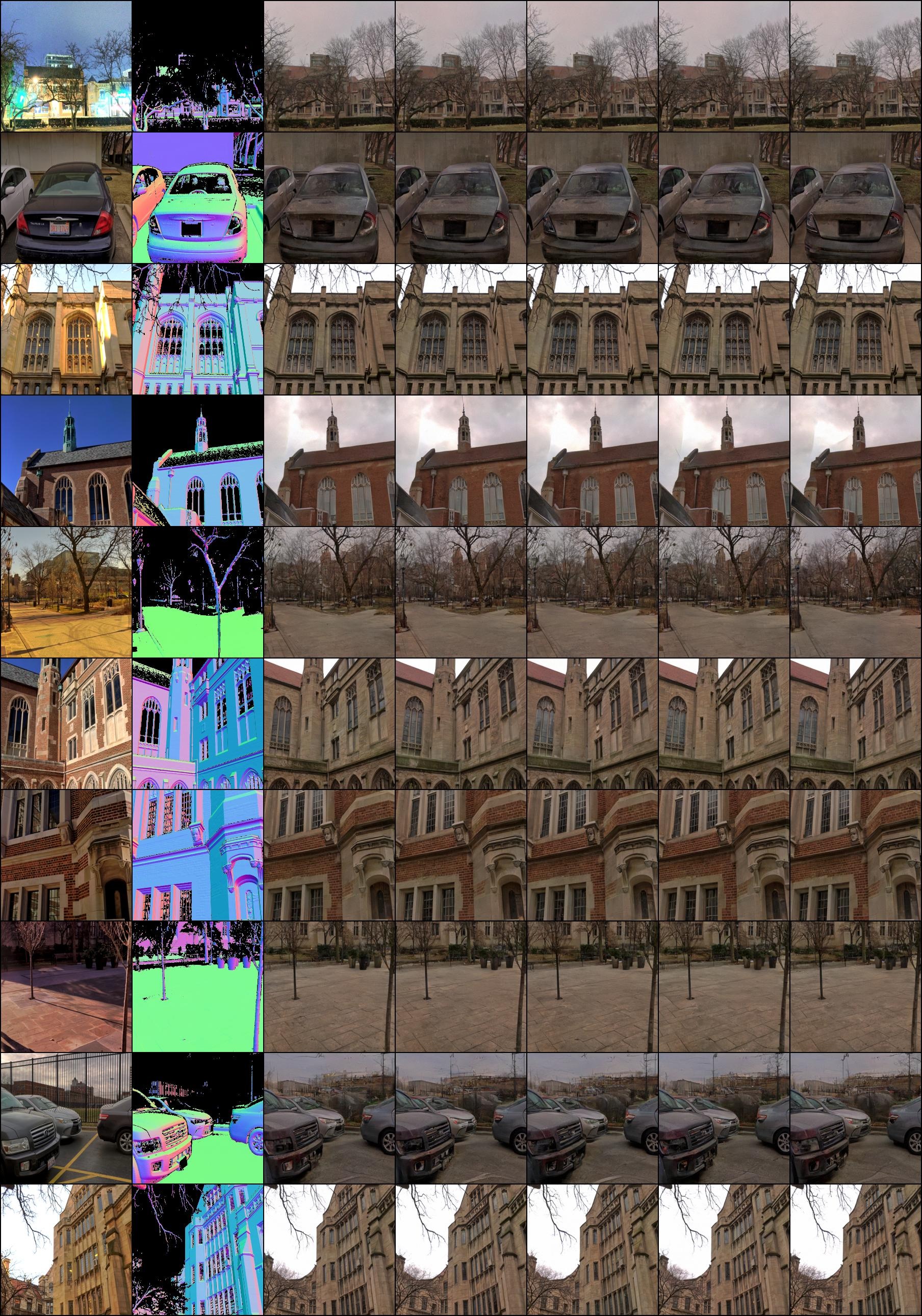}
    \caption{{DDBM model and ECSI sampler on $446$ test images. (NFE=20, FID = 52.01, AFD=5.60).}}
\end{figure}

\begin{figure}[ht]
    \centering
\includegraphics[width=.8\textwidth]{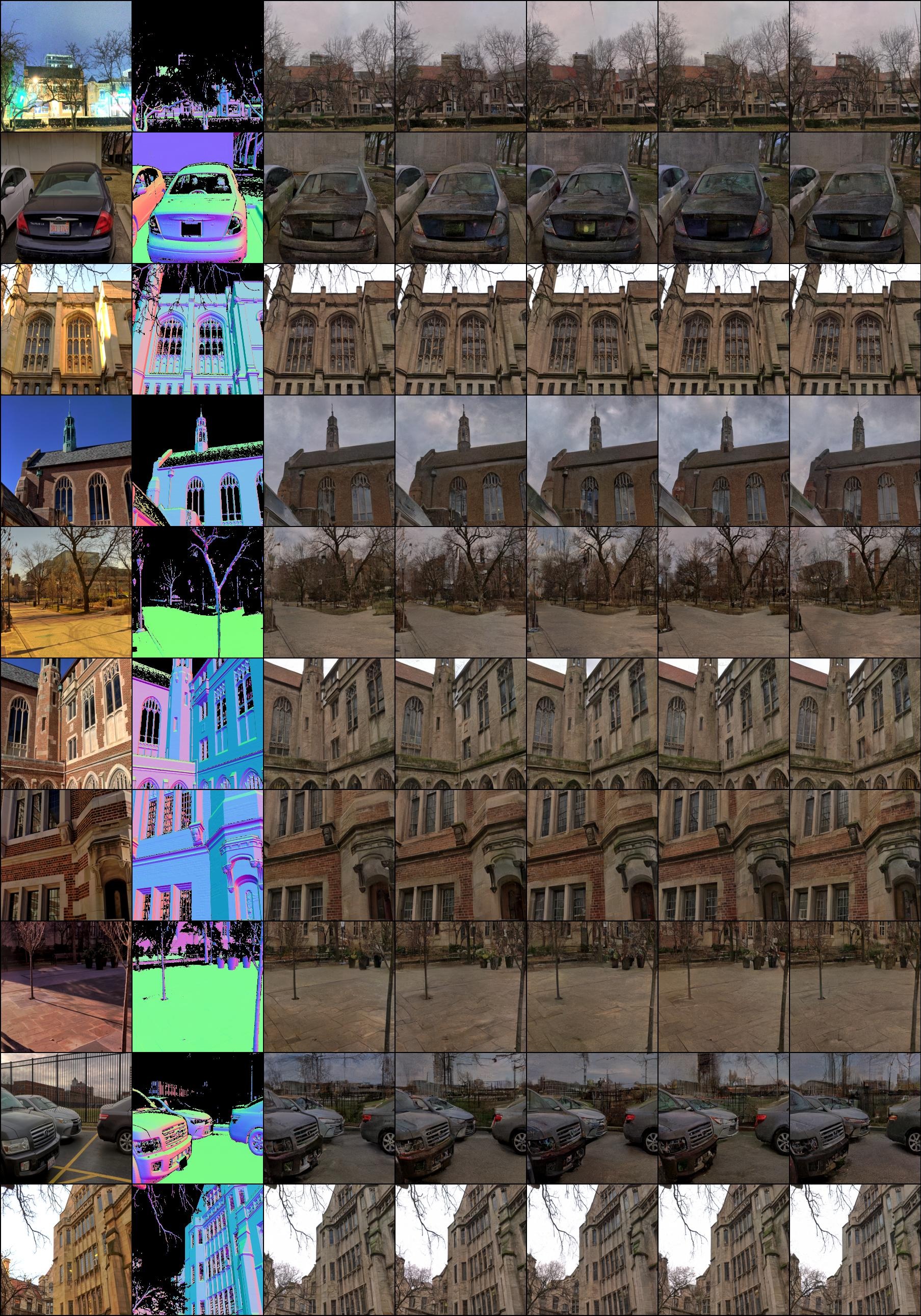}
    \caption{{ECSI model and sampler on $446$ test images. ($\gamma_{\max} = 0.125$, $b = 0.5$, NFE=20, FID = 55.93, AFD=7.39).}}
\end{figure}


\end{document}